\documentclass{article}
\usepackage[T1]{fontenc}

\usepackage{amsmath,amssymb,amsthm}
\usepackage{booktabs}
\usepackage{graphicx}
\usepackage{xcolor}
\usepackage{hyperref}
\usepackage{microtype}
\usepackage{placeins}
\usepackage[numbers,sort&compress]{natbib}
\usepackage[preprint]{icml2026}

\hypersetup{colorlinks=true,citecolor=blue!55!black,linkcolor=blue!55!black,urlcolor=blue!55!black,pdfsubject={Preprint}}
\graphicspath{{figures/}}

\newtheorem{theorem}{Theorem}
\newtheorem{proposition}[theorem]{Proposition}
\newtheorem{corollary}[theorem]{Corollary}

\theoremstyle{definition}
\newtheorem{definition}[theorem]{Definition}
\newtheorem{assumption}[theorem]{Assumption}
\theoremstyle{remark}

\newcommand{\Hid}{\mathcal{H}}
\newcommand{\State}{\mathcal{S}}
\newcommand{\Tests}{\mathcal{W}}
\newcommand{\Audit}{\mathcal{U}}
\newcommand{\Domain}{\mathcal{D}}
\newcommand{\E}{\mathbb{E}}
\newcommand{\R}{\mathbb{R}}
\newcommand{\norm}[1]{\left\lVert#1\right\rVert}
\newcommand{\abs}[1]{\left\lvert#1\right\rvert}

\newcommand{\DFAHardError}{0.187}
\newcommand{\DFASoftError}{0.254}
\newcommand{\DFAUncorrectedError}{0.670}
\newcommand{\DFAHardContraction}{0.807}
\newcommand{\FiberSemanticRMSE}{0.108}
\newcommand{\GlobalSemanticRMSE}{0.574}
\newcommand{\FiberNormalRMSE}{0.000}

\newcommand{\MixedFiberZRMSE}{0.118}
\newcommand{\MixedGlobalZRMSE}{0.193}
\newcommand{\MixedQuantizedZRMSE}{0.129}
\newcommand{\ScalingExponentOne}{3.14}
\newcommand{\ScalingExponentTwo}{4.18}
\newcommand{\MinimumCertificate}{0.052}
\newcommand{\LowBudgetCertificate}{-0.011}
\newcommand{\HighBudgetCertificate}{0.065}
\newcommand{\CertificationTau}{0.0078}
\newcommand{\InitialCEGISCertificate}{-0.055}
\newcommand{\RefinedCEGISCertificate}{0.066}
\newcommand{\CEGISXi}{0.007}
\newcommand{\CEGISTau}{0.009}
\newcommand{\InitialAuditDistortion}{0.267}
\newcommand{\FinalAuditDistortion}{0.008}
\newcommand{\ReanchoringLawError}{0.000}

\newcommand{\HARModernSeeds}{5}
\newcommand{\HARModernGRUError}{0.126\pm0.007}

\newcommand{\HARModernPFXi}{1.83e-15}
\newcommand{\HARModernGlobalXi}{0.107}
\newcommand{\HARModernContraction}{0.220}

\newcommand{\DigitsPFError}{0.101}
\newcommand{\DigitsPFErrorStd}{0.014}
\newcommand{\DigitsGlobalError}{0.863}
\newcommand{\DigitsGlobalErrorStd}{0.003}

\newcommand{\DigitsSpectralError}{0.121}
\newcommand{\DigitsPFInterface}{1.254e-15}

\newcommand{\DigitsPFContraction}{0.220}

\newcommand{\DigitsEnrichedMargin}{0.973}

\newcommand{\MujocoCertificate}{0.396}
\newcommand{\MujocoBudget}{337{,}500}
\newcommand{\MujocoGRUTau}{0.141}
\newcommand{\MujocoLSTMTau}{0.147}
\newcommand{\MujocoSSMTau}{0.168}
\newcommand{\MujocoPFXi}{8.33e-16}
\newcommand{\MujocoGlobalXi}{0.076}
\newcommand{\MujocoNormalContraction}{0.220}

\newcommand{\DoubleCertificate}{0.396}

\newcommand{\MujocoCEGISInitial}{-0.202}
\newcommand{\MujocoCEGISFinal}{0.398}

\newcommand{\EfficiencyKFourBudget}{10{,}274{,}897{,}024}
\newcommand{\EfficiencyPhysicalLowSeconds}{10.8}
\newcommand{\EfficiencyPhysicalHighSeconds}{44.3}
\newcommand{\EfficiencyPhysicalBudget}{337{,}500}
\newcommand{\EfficiencyPhysicalCertificate}{0.396}
\newcommand{\EfficiencyCEGISMilliseconds}{0.2}

\icmltitlerunning{Predictive Fibers}

\begin{document}

\twocolumn[
\icmltitle{What Can a Recurrent State Safely Forget?}
\begin{icmlauthorlist}
\icmlauthor{Linzhe Zhang}{neu}
\icmlauthor{Changming Xu}{neu}
\end{icmlauthorlist}
\icmlaffiliation{neu}{Graduate School, Northeastern University}
\icmlcorrespondingauthor{Linzhe Zhang}{cfmy007@gmail.com}
\icmlcorrespondingauthor{Changming Xu}{changmingxu@neuq.edu.cn}
\icmlkeywords{recurrent neural networks, predictive state, robustness, certification}
\vskip 0.3in
]
\printAffiliationsAndNotice{}

\begin{abstract}
Recurrent models must preserve information that changes future behavior while suppressing hidden-state error.  These objectives conflict: contraction improves stability, but contraction of a future-distinguishing direction destroys memory.  We formalize the boundary through the \emph{predictive quotient} of a recurrent state space.  Two hidden states are equivalent when they induce the same complete conditional future; their equivalence classes are predictive fibers.  Every exact semantics-preserving corrector acts as the identity on this quotient.  At a regular point with hidden dimension $d$ and predictive dimension $k$, it can eliminate at most $d-k$ independent directions.  This yields a discrete--continuous boundary: finite predictive states can have positive-radius exact correction basins, whereas an uncountable continuum of future-distinguishable states cannot be exactly decoded after arbitrary positive-radius perturbations in finite-dimensional Euclidean space.

To operationalize this principle in learning, we develop an auditable finite-future framework.  A compact deployment bank $W$ is evaluated against an independent audit bank $\Audit\supseteq W$ on a declared correction domain.  Under generative probe access and audit-metric coverage, finite stochastic rollouts furnish a high-probability lower certificate for the separation margin $\Omega^{\Domain}_{W\mid\Audit}(\delta)$.  Preserving learned $W$-predictions within this certified margin guarantees bounded audit-semantic distortion.  For intrinsic audit dimension $k$, the required number of probe outcomes scales as $\widetilde O(M\Omega^{-(k+2)})$, where $M=\abs{\Audit}$ reflects the audit scope; a matching minimax lower bound proves this exponent is optimal.  Extending guarantees across full continuous futures is achieved via an explicit completeness modulus.  Controlled experiments validate the certified margins, their scaling behavior, and automated probe refinement under a safety-first evaluation paradigm.
\end{abstract}

\section{Introduction}

A recurrent state summarizes a history through an update $h_t=F_\theta(h_{t-1},x_t)$.  It is asked to do two incompatible things.  It must retain every aspect of the past that changes the future, while also rejecting noise, numerical drift, and irrelevant variability.  Global contraction solves the second problem by also erasing the first.  The issue is not whether a recurrent state should contract, but \emph{which directions may contract without changing the computation}.

Predictive-state representations define state using conditional predictions of future tests rather than an unobserved latent coordinate \citep{littman2001predictive}.  Neural descendants such as PSRNNs and predictive-state decoders already use future prediction to shape recurrent representations \citep{downey2017psrnn,venkatraman2017psd}.  Continuous-attractor analyses similarly distinguish weakly stable memory directions from strongly stabilized transverse directions \citep{sagodi2024continuous}.  These lines of work motivate, but do not state, a safety criterion for modifying an internal recurrent representation.

This paper makes that criterion explicit.  A hidden-state displacement is safe to remove exactly when it stays inside a \emph{predictive fiber}: the set of hidden states that cannot be distinguished by any relevant future.  The resulting contributions are:

\begin{itemize}
\item a quotient constraint showing that an exact corrector is the identity on predictive state, and a codimension bound on correctable directions;
\item a sharp contrast between positive-radius correction for finite predictive codes and its impossibility for a predictive continuum, plus an approximate packing bound;
\item a finite, non-circular audit formulation and a PAC lower certificate for predictive separation under stated generative access, with structural modulus conditions for full continuous futures;
\item upper and minimax lower bounds with the same $\Omega^{-(k+2)}$ exponent; and
\item audited Predictive-Fiber CEGIS (PF-CEGIS), which enlarges a deployment probe bank only when an independent audit exposes an unsafe alias.
\end{itemize}

\section{The Geometry of Safe Forgetting}\label{sec:geometry}

Let $\Hid\subseteq\R^d$ be a recurrent hidden-state domain.  A future experiment (or test) $w\in\Tests$ has bounded outcome $Y_w\in[0,1]$.  Let $\phi_w(h)=\E[Y_w\mid h]$ and define the complete predictive behavior $\Phi_\infty(h)=(\phi_w(h))_{w\in\Tests}$.  Two hidden states are predictively equivalent ($h\sim h'$) precisely when $\Phi_\infty(h)=\Phi_\infty(h')$.  The quotient $\State=\Hid/{\sim}$ is the predictive state space, and the canonical projection $\pi:\Hid\to\State$ maps each internal realization to its behavioral equivalence class.  An equivalence class $\mathcal F_s=\pi^{-1}(s)$ is a \emph{predictive fiber}.  By construction, all future predictions factor through the quotient state $s=\pi(h)$; we write $\phi_w(s)$ without ambiguity.

\begin{definition}[Semantics-preserving corrector]
A map $Q:\Hid\to\Hid$ is semantics preserving on a domain $D\subseteq\Hid$ if $\pi(Q(h))=\pi(h)$ for all $h\in D$.  Equivalently, $Q$ maps every hidden state into its own predictive fiber.
\end{definition}

\begin{theorem}[Predictive quotient constraint]\label{thm:quotient}
Let $Q$ be semantics preserving.  It induces the identity map on the predictive quotient.  Suppose that near $h$, $\pi$ is represented by a $C^1$ submersion of rank $k$, and that $Q(h)=h$.  Then
\[
 \begin{gathered}
 D\pi_h(DQ_h-I)=0,\\
 \operatorname{rank}(DQ_h)\ge k,\qquad \dim\ker DQ_h\le d-k.
 \end{gathered}
\]
Consequently, at most $d-k$ independent local directions can be eliminated by an exact corrector.  If a $C^1$ local section $E:\State\to\Hid$ exists, the canonicalizer $Q^\star=E\circ\pi$ is an idempotent corrector with rank $k$ along $E(\State)$.
\end{theorem}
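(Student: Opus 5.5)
The plan is to handle the global statement first, then the local linear algebra at the fixed point, and finally the section construction.

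\emph{Induced map on the quotient.} Define $\bar Q:\State\to\State$ by $\bar Q(\pi(h))=\pi(Q(h))$. This is well defined and equals the identity, because $\pi(Q(h))=\pi(h)$ for every $h$. Hence $\pi\circ Q=\pi$ on the domain.

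\emph{Local step.} Near $h$, replace $\pi$ by its $C^1$ submersion representative $p:U\to\R^k$ of rank $k$, so that $\pi\circ Q=\pi$ becomes $p\circ Q=p$ on a neighborhood of $h$. Since $Q(h)=h$, $Q$ maps a smaller neighborhood into $U$, and the chain rule applies. I assume $Q$ is differentiable at $h$, which the statement implicitly requires by writing $DQ_h$. The chain rule gives $Dp_{Q(h)}DQ_h=Dp_h$, and because $Q(h)=h$ this reads $Dp_h(DQ_h-I)=0$, which is the first claim.

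\emph{Rank bound.} Rank is submultiplicative, so $k=\operatorname{rank}Dp_h=\operatorname{rank}(Dp_h DQ_h)\le \operatorname{rank}DQ_h$. Rank--nullity then gives $\dim\ker DQ_h\le d-k$. The phrase ``eliminated directions'' is read as $\ker DQ_h$: these are the infinitesimal displacements that $Q$ annihilates. A complementary observation is that $\ker DQ_h\subseteq \ker Dp_h$. Indeed, if $DQ_h v=0$, then $Dp_h v=Dp_h DQ_h v=0$. So every eliminated direction is tangent to the fiber, and the fiber tangent space has dimension exactly $d-k$ by the submersion assumption.

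\emph{Canonicalizer.} Let $E$ be a local section, meaning $\pi\circ E=\mathrm{id}$. Then $\pi\circ Q^\star=\pi\circ E\circ\pi=\pi$, so $Q^\star$ is semantics preserving. It is idempotent because $Q^\star\circ Q^\star=E\circ(\pi\circ E)\circ\pi=E\circ\pi$.

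For the rank, take a point $h=E(s)$. Then $Q^\star(h)=E(\pi(E(s)))=E(s)=h$, so the rank lower bound above applies and gives $\operatorname{rank}DQ^\star_h\ge k$. In the other direction, the chain rule gives $DQ^\star_h=DE_s\,D\pi_h$, which factors through $\R^k$, so its rank is at most $k$. Hence the rank is exactly $k$.

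\emph{Main obstacle.} The main obstacle is interpretive rather than technical. The statement treats $\pi$ as ``represented by'' a submersion and treats $E$ as $C^1$ on $\State$, so I need to fix a chart in which $\State$ is locally $\R^k$ and $\pi$ coincides with $p$. Once that chart is fixed, every step above is a chain-rule computation.
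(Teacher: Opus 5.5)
Your proposal is correct and follows essentially the same route as the paper: you differentiate $\pi\circ Q=\pi$ at the fixed point, use $D\pi_h=D\pi_h DQ_h$ with $\operatorname{rank}D\pi_h=k$ to bound $\operatorname{rank}DQ_h$ (with rank--nullity for the kernel), and use $\pi\circ E=\mathrm{id}$ to get idempotence of $Q^\star$. Your two-sided rank argument at $h=E(s)$ (lower bound from the fixed-point case, upper bound because $DQ^\star_h=DE_s\,D\pi_h$ factors through $\R^k$) spells out what the paper compresses into ``by the chain rule,'' and the inclusion $\ker DQ_h\subseteq\ker D\pi_h$ is a harmless extra.
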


\begin{proof}
The first claim is simply $\pi\circ Q=\pi$.  At a fixed point, differentiate this equality to obtain $D\pi_hDQ_h=D\pi_h$; rearrangement gives the first display.  Since $D\pi_h=D\pi_hDQ_h$ and $D\pi_h$ has rank $k$, $DQ_h$ has rank at least $k$.  For a section, $\pi\circ E=I$, so $Q^\star\circ Q^\star=Q^\star$ and $DQ^\star$ has rank $k$ at canonical states by the chain rule.
\end{proof}

Geometrically, Theorem~\ref{thm:quotient} establishes that an exact corrector may aggressively contract redundant representational directions within a fiber, but must act as the identity along the predictive quotient.  The fixed-point condition in the differential identity is structurally exact: away from a fixed point, differentiating $\pi\circ Q=\pi$ yields $D\pi_{Q(h)}DQ_h=D\pi_h$, reflecting the pushforward along non-trivial corrector trajectories.  Crucially, this codimension bound is geometric rather than architectural: it applies universally to learned RNNs, continuous attractors, or physical state estimators.

\subsection{The Discrete--Continuous Boundary}

Whether such semantics-preserving contraction can achieve exact error recovery hinges fundamentally on the topological cardinality of the predictive quotient $\State$.  When $\State$ is finite or discrete, distinct fibers admit disjoint, positive-radius basins of attraction in $\R^d$, enabling robust hard-decision snapping.  However, this discrete intuition completely breaks down for an uncountable predictive continuum:

\begin{theorem}[No positive-radius exact decoder for a predictive continuum]\label{thm:nogo}
Let $\State$ be an uncountable set of pairwise future-distinguishable states and $E:\State\to\R^d$ an encoding.  There do not exist a deterministic decoder $C:\R^d\to\State$ and $\epsilon>0$ satisfying
\[
 C(E(s)+\eta)=s\quad\text{for all }s\in\State\text{ and }\norm{\eta}\le\epsilon.
\]
\end{theorem}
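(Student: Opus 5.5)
Suppose, for contradiction, that such a decoder $C$ and radius $\epsilon>0$ exist. The plan is a disjoint-balls packing argument in $\R^d$, which uses only cardinality and never the predictive structure.

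\textbf{Step 1: the balls are disjoint.} For each $s\in\State$ let $B_s$ be the closed ball of radius $\epsilon$ about $E(s)$. The hypothesis says $C\equiv s$ on $B_s$. If $s\neq s'$ and some $z\in B_s\cap B_{s'}$ existed, then determinism of $C$ would give $s=C(z)=s'$, a contradiction. Hence the family $\{B_s\}_{s\in\State}$ is pairwise disjoint. As a byproduct, $E$ is injective with $\norm{E(s)-E(s')}>2\epsilon$ for $s\neq s'$.

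\textbf{Step 2: count the balls.} I would finish in one of two ways.
\begin{itemize}
\item Each $B_s$ has nonempty interior, so it contains a point of $\mathbb{Q}^d$. Choosing such a point for each $s$ gives a map $\State\to\mathbb{Q}^d$. By disjointness this map is injective, so $\State$ is countable.
\item Alternatively, the open balls of radius $\epsilon/2$ about the points $E(s)$ are disjoint and all have the same positive volume. Only finitely many of them fit inside any bounded region $[-n,n]^d$. Taking the union over $n$ again shows $\State$ is countable.
\end{itemize}
Either route contradicts the assumption that $\State$ is uncountable.

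\textbf{Role of the hypotheses and main obstacle.} The decisive hypothesis is that $C$ is a deterministic single-valued function, which is what makes the balls disjoint. Measurability of $C$ and continuity of $E$ are never needed, and pairwise future-distinguishability only serves to make the elements of $\State$ genuinely distinct labels. The only point needing care is Step 2: the counting must use the positive radius together with the separability of $\R^d$, and not merely the injectivity of $E$. I expect no real obstacle beyond stating this cleanly. I would also note that finite $d$ matters only through separability, and that the bound is sharp in the sense that a countable or finite $\State$ does admit such basins, consistent with the preceding discussion.
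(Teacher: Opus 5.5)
Your proof is correct and matches the paper's argument: single-valuedness of $C$ makes the closed $\epsilon$-balls about codewords disjoint, and a positively separated subset of $\R^d$ is countable. Your volume-in-bounded-cubes route is essentially the paper's own counting step, and your rational-point injection is an equally valid alternative.
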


\begin{proof}
The closed $\epsilon$-balls about two distinct codewords must be disjoint: a point in their intersection would require two decoder outputs.  Hence the codewords are pairwise more than $2\epsilon$ apart.  A positive-separated subset of $\R^d$ is countable (partition space into bounded cubes, each of which contains only finitely many such points), contradicting uncountability.
\end{proof}

Theorem~\ref{thm:nogo} delineates a fundamental geometric boundary: while discrete codes admit positive-radius basin snapping, an uncountable continuum in finite-dimensional Euclidean space strictly precludes exact state recovery under arbitrary perturbations.  Consequently, continuous predictive memory must tolerate approximate semantic retention within a certified distortion margin $\delta$.  In this approximate regime, the required hidden dimension is governed by predictive metric entropy:

\begin{theorem}[Predictive packing bound]\label{thm:packing}
Suppose $E(\State)\subseteq[-R,R]^d$ and a decoder $C$ obeys $d_\State(C(E(s)+\eta),s)\le\delta$ for all $\norm{\eta}_\infty\le\epsilon$.  Then
\[
 M_\State(2\delta)\le\left(1+\frac{R}{\epsilon}\right)^d,
 \qquad
 d\ge\frac{\log M_\State(2\delta)}{\log(1+R/\epsilon)}.
\]
\end{theorem}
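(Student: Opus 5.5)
The plan is a packing-versus-volume argument in $\ell_\infty$, modeled on the proof of Theorem~\ref{thm:nogo} but with the exact decoding condition replaced by $\delta$-approximate decoding. Here $M_\State(2\delta)$ is read as the $2\delta$-packing number of $(\State,d_\State)$: the largest number of states that are pairwise more than $2\delta$ apart in $d_\State$. Fix such a packing $s_1,\dots,s_M$ with $M=M_\State(2\delta)$ and write $x_i=E(s_i)$.

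\emph{Step 1: disjoint cubes.} Let $B_i=x_i+[-\epsilon,\epsilon]^d$ be the closed $\ell_\infty$ cube of radius $\epsilon$. Suppose some $y$ lay in $B_i\cap B_j$ with $i\neq j$. Then $y=x_i+\eta=x_j+\eta'$ with $\norm{\eta}_\infty,\norm{\eta'}_\infty\le\epsilon$. The decoding hypothesis gives $d_\State(C(y),s_i)\le\delta$ and $d_\State(C(y),s_j)\le\delta$. The triangle inequality then gives $d_\State(s_i,s_j)\le 2\delta$, contradicting the packing. Hence the cubes are pairwise disjoint. To avoid boundary-contact issues, I would use half-open cubes or compare interiors, since measure is all that matters.

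\emph{Step 2: volume count.} Every codeword lies in $[-R,R]^d$, so every cube lies in $[-R-\epsilon,R+\epsilon]^d$. Each cube has volume $(2\epsilon)^d$ and the enclosing box has volume $(2R+2\epsilon)^d$. Disjointness therefore gives
\[
M(2\epsilon)^d\le (2R+2\epsilon)^d,
\]
that is, $M\le(1+R/\epsilon)^d$.

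\emph{Step 3: dimension bound.} Taking logarithms gives $\log M\le d\log(1+R/\epsilon)$, and dividing by the positive quantity $\log(1+R/\epsilon)$ gives the lower bound on $d$. If $\State$ admits no finite $2\delta$-packing, the same argument applied to arbitrarily large finite packings yields a contradiction, so the bound also certifies finiteness.

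\emph{Expected difficulty.} The only delicate point is matching conventions: whether the packing uses the strict separation $>2\delta$ and whether the cubes are open or closed. With strict separation and closed cubes the argument above is airtight. If instead $M_\State$ denotes a covering number, I would first pass through the standard comparison that any maximal $2\delta$-separated set is a $2\delta$-cover. I expect no other obstacle, because the calculation is elementary.
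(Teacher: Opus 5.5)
Your proposal is correct and matches the paper's proof: the closed $\ell_\infty$ cubes of radius $\epsilon$ around the codewords of a $2\delta$-separated set are pairwise disjoint by the triangle inequality for $d_\State$, and comparing volumes inside $[-R-\epsilon,R+\epsilon]^d$ gives $M_\State(2\delta)\le(1+R/\epsilon)^d$. The only difference is that you write out the volume count that the paper compresses into ``packing these balls using their centers in $[-R,R]^d$.''
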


\begin{proof}
For predictive states separated by more than $2\delta$, the corresponding $\ell_\infty$ balls of radius $\epsilon$ must be disjoint; otherwise one hidden point would decode within $\delta$ of both.  Packing these balls using their centers in $[-R,R]^d$ yields the first bound; the second is its rearrangement.
\end{proof}

For a regular $k$-dimensional predictive manifold with covering number proportional to $\delta^{-k}$, Theorem~\ref{thm:packing} yields $d=\Omega(k\log(1/\delta))$ at fixed normalized noise.  This establishes that continuous memory retention cannot rely on topological basins, but must instead be managed via metric margins---motivating the operational auditing framework we construct next.
\section{Auditable Certification and Adaptive Refinement}\label{sec:audit}

While the complete predictive map $\Phi_\infty$ comprises infinitely many future test coordinates, Whitney-type embedding theory ensures that a finite coordinate chart captures the quotient on compact domains:

\begin{proposition}[Finite predictive embedding]\label{prop:embedding}
Let $\State$ be a compact $C^1$ manifold and suppose every $\phi_w$ is $C^1$.  If future tests separate points ($s\ne s'\Longrightarrow \phi_w(s)\ne\phi_w(s')$) and separate nonzero tangent vectors ($v\in T_s\State\setminus\{0\}\Longrightarrow d\phi_w(s)[v]\ne0$), then a finite probe set $W\subset\Tests$ exists for which $\Phi_W=(\phi_w)_{w\in W}$ is a $C^1$ embedding.
\end{proposition}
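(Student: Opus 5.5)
The plan is a compactness argument in two stages, first securing injectivity of the differential and then global injectivity, using the hypothesis that \emph{some} test separates each pair of points and each nonzero tangent vector (the hypothesis is read with an existential quantifier over $w$). Since $\State$ is compact, an injective immersion of $\State$ is automatically a homeomorphism onto its image, hence a $C^1$ embedding. So it suffices to find a finite $W$ for which $\Phi_W$ is both immersive and injective.

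\textbf{Immersion.} Work on the unit tangent bundle $T^1\State$, defined with respect to any Riemannian metric; it is compact because $\State$ is compact. For each $(s,v)\in T^1\State$ the hypothesis supplies a test $w$ with $d\phi_w(s)[v]\neq0$. Since $\phi_w$ is $C^1$, the map $(s',v')\mapsto d\phi_w(s')[v']$ is continuous on $T^1\State$, so this nonvanishing persists on an open neighborhood of $(s,v)$. Extracting a finite subcover yields a finite set $W_1$ such that, for every unit tangent vector, some $w\in W_1$ has $d\phi_w\neq0$ on it. Hence $D\Phi_{W_1}(s)$ is injective for all $s$, and this property survives enlarging $W_1$.

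\textbf{Injectivity.} Here the difficulty is that the off-diagonal set $\{(s,s'):s\ne s'\}$ is not compact, so a direct covering argument fails near the diagonal. I would first use the immersion from the previous stage: an immersion is locally injective, and by a standard compactness argument (a Lebesgue-number style lemma for immersions of compact manifolds) there is a neighborhood $N$ of the diagonal $\Delta\subset\State\times\State$ such that $\Phi_{W_1}(s)\ne\Phi_{W_1}(s')$ whenever $(s,s')\in N\setminus\Delta$.

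I expect this uniform-neighborhood step to be the main obstacle. To prove it, suppose instead there are sequences $s_n\ne s_n'$ with $\Phi_{W_1}(s_n)=\Phi_{W_1}(s_n')$ and both converging to a common point $s$. In a chart around $s$, the mean value theorem applied to each coordinate of $\Phi_{W_1}$ produces intermediate points converging to $s$. Passing to a limit of the normalized differences $(s_n'-s_n)/\norm{s_n'-s_n}$ then gives a unit vector $v$ with $D\Phi_{W_1}(s)v=0$, which contradicts the immersion established above.

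The complement $K=(\State\times\State)\setminus N$ is compact and avoids the diagonal. For each $(s,s')\in K$ there is a test $w$ with $\phi_w(s)\ne\phi_w(s')$, and by continuity this inequality holds on an open neighborhood of $(s,s')$. A finite subcover of $K$ gives a finite set $W_2$ separating every pair in $K$.

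Setting $W=W_1\cup W_2$ makes $\Phi_W$ an injective immersion, since adding coordinates preserves both properties. By the compactness observation above, $\Phi_W$ is therefore a $C^1$ embedding.
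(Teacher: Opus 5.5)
Your proof is correct and follows the paper's argument essentially step for step. A finite subcover of the compact unit tangent bundle makes $\Phi_{W_1}$ an immersion, a neighborhood of the diagonal is separated by $W_1$, a second finite subcover handles the compact complement, and compactness of $\State$ upgrades the injective immersion to an embedding. The paper takes the diagonal-neighborhood step directly from local injectivity, whereas you prove it with a mean-value argument on normalized differences. That argument is valid, but the step is easier than you expect: if $\Phi_{W_1}$ is injective on open sets $U_s\ni s$, then $V=\bigcup_s U_s\times U_s$ is already an open neighborhood of the diagonal on which distinct points are separated.
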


The proof is given in Appendix~\ref{app:embedding}.  Proposition~\ref{prop:embedding} naturally grounds predictive state representations \citep{littman2001predictive} within differential observability and embedding theory \citep{takens1981detecting}.  However, moving from topological existence to verifiable runtime safety requires an operational decision metric that can be estimated and audited from finite data, certified under sampling uncertainty, and algorithmically refined when blind spots appear.

\subsection{Audited Semantics and Deterministic Certificates}

To make safety verifiable, we declare a correction domain $\Domain\subseteq\State$, an audit bank $\Audit=\{u_1,\ldots,u_M\}$, and a compact deployment bank $W\subseteq\Audit$ before measuring margins, with $m=\abs W \ll M$.  Define the empirical future metrics $d_\Audit(s,s')=\norm{\Phi_\Audit(s)-\Phi_\Audit(s')}_\infty$ and $d_W(s,s')=\norm{\Phi_W(s)-\Phi_W(s')}_\infty$.  The audited separation margin is defined as
\begin{equation}\label{eq:omega}
 \Omega^{\Domain}_{W\mid\Audit}(\delta)
 =\inf_{\substack{s,s'\in\Domain\\d_\Audit(s,s')\ge\delta}}d_W(s,s').
\end{equation}
By standard convention, the infimum over an empty set is $+\infty$; non-trivial certification assumes that $\Domain$ has audit diameter at least $\delta$.  Crucially, \eqref{eq:omega} operates entirely on observable distances under $\Audit$ rather than requiring unobservable quantities under $d_\infty$, effectively decoupling low operational inference cost ($m=\abs W$) from comprehensive safety verification scope ($M=\abs\Audit$).

When an all-future guarantee is required, it relies on an independently established \emph{completeness modulus} $\rho_\Audit$ satisfying $d_\infty(s,s')\le\rho_\Audit(d_\Audit(s,s'))$ with $\rho_\Audit(r)\to0$ as $r\to0$.  Under uniform continuity and an equicontinuous test family, such a modulus follows directly by compactness; in finite-rank linear systems, core tests provide an analytical special case whose stability is governed by reconstruction conditioning.

\begin{theorem}[Deterministic audit-safety certificate]\label{thm:deterministic}
Let $\psi_W(h)$ estimate $\Phi_W(\pi(h))$ with uniform calibration error at most $\tau$ on $\pi^{-1}(\Domain)$.  If a corrector $Q$ maps $\pi^{-1}(\Domain)$ into itself and obeys $\norm{\psi_W(Q(h))-\psi_W(h)}_\infty\le\xi$, then
\[
 \xi+2\tau<\Omega^{\Domain}_{W\mid\Audit}(\delta)
 \implies
 d_\Audit(\pi(Q(h)),\pi(h))<\delta.
\]
Under completeness modulus $\rho_\Audit$, the corresponding full-future distortion satisfies $d_\infty(\pi(Q(h)),\pi(h))<\rho_\Audit(\delta)$.
\end{theorem}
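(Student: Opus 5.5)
The plan is a triangle-inequality bound on the $W$-distance followed by the contrapositive of the margin definition \eqref{eq:omega}.

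Fix $h\in\pi^{-1}(\Domain)$ and write $s=\pi(h)$ and $s'=\pi(Q(h))$. Because $Q$ maps $\pi^{-1}(\Domain)$ into itself, both $h$ and $Q(h)$ lie in the calibrated region, and $s,s'\in\Domain$. Calibration gives $\norm{\psi_W(h)-\Phi_W(s)}_\infty\le\tau$, and the same bound holds at $Q(h)$ with $s'$. Inserting the estimator values and applying the triangle inequality in $\ell_\infty$ yields
\[
 d_W(s',s)=\norm{\Phi_W(s')-\Phi_W(s)}_\infty\le\tau+\norm{\psi_W(Q(h))-\psi_W(h)}_\infty+\tau\le\xi+2\tau .
\]

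Next, argue by contradiction using the definition of the margin. Suppose $d_\Audit(s',s)\ge\delta$. Since $s,s'\in\Domain$, the pair $(s,s')$ is admissible in the infimum \eqref{eq:omega}, so $\Omega^{\Domain}_{W\mid\Audit}(\delta)\le d_W(s,s')\le\xi+2\tau$. This contradicts the hypothesis $\xi+2\tau<\Omega^{\Domain}_{W\mid\Audit}(\delta)$, so $d_\Audit(\pi(Q(h)),\pi(h))<\delta$.

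The empty-infimum convention causes no difficulty. If no admissible pair exists, then every pair in $\Domain$ has audit distance below $\delta$, and the conclusion holds trivially.

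For the full-future statement, apply the completeness modulus to obtain $d_\infty(s',s)\le\rho_\Audit(d_\Audit(s',s))$. The strict inequality $\rho_\Audit(d_\Audit)<\rho_\Audit(\delta)$ needs $\rho_\Audit$ to be strictly increasing, which is where care is required. If $\rho_\Audit$ is only nondecreasing, I would either assume strict monotonicity, which can always be arranged by replacing $\rho_\Audit(r)$ with $\rho_\Audit(r)+r$, or state the conclusion as $\le\rho_\Audit(\delta)$.

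The only other point that needs attention is that calibration must hold at $Q(h)$. The self-map hypothesis on $Q$ supplies exactly this.
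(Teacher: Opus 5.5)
Your proof is correct and follows the paper's own argument. You bound $d_W(\pi(Q(h)),\pi(h))\le\xi+2\tau$ by the triangle inequality, using calibration at both $h$ and $Q(h)$ (which the self-map hypothesis guarantees), and then observe that $d_\Audit\ge\delta$ would make the pair admissible in \eqref{eq:omega}, contradicting $\xi+2\tau<\Omega^{\Domain}_{W\mid\Audit}(\delta)$.

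Your caveat on the full-future clause is also sound and goes beyond what the paper's proof addresses. The paper only requires $\rho_\Audit(r)\to0$, so $\rho_\Audit$ must be nondecreasing to give $\le\rho_\Audit(\delta)$, and strictly increasing (e.g.\ after your replacement $\rho_\Audit(r)+r$) to give the strict inequality as stated.
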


\begin{proof}
The triangle inequality bounds $d_W(\pi(Q(h)),\pi(h))$ by $\xi+2\tau$.  Both predictive states lie in $\Domain$.  If their audit distance were at least $\delta$, definition \eqref{eq:omega} would give the contradictory lower bound $\Omega^{\Domain}_{W\mid\Audit}(\delta)$.
\end{proof}

\subsection{Finite-Rollout PAC Certification and Sample Complexity}

While Theorem~\ref{thm:deterministic} provides an exact safety condition, deploying it in practice requires estimating predictive distances from finite stochastic rollouts without oracle knowledge of true conditional expectations.

\begin{assumption}[Pre-declared generative audit access]\label{assump:generative}
Before observing any rollout outcomes, the protocol fixes $(\Domain,\Audit,\delta,\eta,N,M,R,\alpha)$ and anchors $s_1,\ldots,s_N\in\Domain$ forming an $\eta$-net of $\Domain$ under $d_\Audit$.  A conditional reset oracle initializes rollouts at $s_i$.  For each anchor $i$, probe $u$, and repetition $r$, it returns a bounded outcome $Y_{i,u}^{(r)}\in[0,1]$ with conditional mean $\phi_u(s_i)$, mutually independent across $(i,u,r)$.  Theorem~\ref{thm:pac} applies to any fixed $W\subseteq\Audit$; Corollary~\ref{cor:cegis-reuse} handles its finite-bank adaptive selection.
\end{assumption}

Let $\widehat\phi_u(s_i)=R^{-1}\sum_{r=1}^R Y_{i,u}^{(r)}$ and let $e_R=\sqrt{\log(2NM/\alpha)/(2R)}$.  Define the empirical active pair set $\widehat{\mathcal A}_\delta=\{(i,j):\widehat d_\Audit(i,j)\ge\delta-2\eta-2e_R\}$, and the empirical margin $\widehat\Omega_{W\mid\Audit}(\delta)=\min_{(i,j)\in\widehat{\mathcal A}_\delta}\widehat d_W(i,j)$.

\begin{theorem}[Finite-audit PAC separation]\label{thm:pac}
Under Assumption~\ref{assump:generative}, with probability at least $1-\alpha$,
\begin{equation}\label{eq:pac}
 \Omega^{\Domain}_{W\mid\Audit}(\delta)\ge
 \underline\Omega^{\Domain}_{W\mid\Audit}(\delta):=
 \widehat\Omega_{W\mid\Audit}(\delta)-2e_R-2\eta.
\end{equation}
Replacing $\Omega^{\Domain}_{W\mid\Audit}$ by its lower certificate in Theorem~\ref{thm:deterministic} yields a PAC-safe corrector.
\end{theorem}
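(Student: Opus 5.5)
The argument is a union bound on a single good event, followed by a reduction from an arbitrary active pair in $\Domain$ to a nearby anchor pair, with the triangle inequality in $\ell_\infty$ doing the rest.

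\textbf{Step 1: the good event.} Hoeffding's inequality applies to each of the $NM$ empirical means $\widehat\phi_u(s_i)$, since the outcomes are bounded in $[0,1]$ and independent across repetitions. A union bound then shows that, with probability at least $1-\alpha$, the event
\[
G=\left\{\abs{\widehat\phi_u(s_i)-\phi_u(s_i)}\le e_R\ \text{for all } i\le N,\ u\in\Audit\right\}
\]
holds. This is exactly why $e_R=\sqrt{\log(2NM/\alpha)/(2R)}$. On $G$, for every anchor pair $(i,j)$ and every $W\subseteq\Audit$,
\[
\abs{\widehat d_\Audit(i,j)-d_\Audit(s_i,s_j)}\le 2e_R,\qquad \abs{\widehat d_W(i,j)-d_W(s_i,s_j)}\le 2e_R.
\]
Because $W$ is fixed before sampling, no further uniformity over $W$ is needed; Corollary~\ref{cor:cegis-reuse} handles adaptive selection. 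The rest of the argument is deterministic on $G$.

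\textbf{Step 2: reduction to anchors.} Fix $s,s'\in\Domain$ with $d_\Audit(s,s')\ge\delta$. By the $\eta$-net property, choose anchors $s_i,s_j$ with $d_\Audit(s,s_i)\le\eta$ and $d_\Audit(s',s_j)\le\eta$. The triangle inequality gives
\[
d_\Audit(s_i,s_j)\ge\delta-2\eta,
\]
and on $G$ this yields $\widehat d_\Audit(i,j)\ge\delta-2\eta-2e_R$. So $(i,j)\in\widehat{\mathcal A}_\delta$, and therefore $\widehat d_W(i,j)\ge\widehat\Omega_{W\mid\Audit}(\delta)$. Converting back, $d_W(s_i,s_j)\ge\widehat\Omega_{W\mid\Audit}(\delta)-2e_R$.

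\textbf{Step 3: transfer to the original pair, the main obstacle.} We must bound $d_W(s,s')$ from below by $d_W(s_i,s_j)$ minus a $2\eta$ term, but the net is defined in $d_\Audit$, not $d_W$. The resolution is that $W\subseteq\Audit$, so $d_W\le d_\Audit$ pointwise: the sup-norm over fewer coordinates is smaller. Hence $d_W(s,s_i)\le\eta$ and $d_W(s',s_j)\le\eta$, and the triangle inequality gives
\[
d_W(s,s')\ge d_W(s_i,s_j)-2\eta\ge\widehat\Omega_{W\mid\Audit}(\delta)-2e_R-2\eta.
\]
Taking the infimum over all admissible $(s,s')$ gives \eqref{eq:pac}.

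\textbf{Degenerate cases.} If there are no admissible pairs, the left side is $+\infty$ and the bound holds trivially. If $\widehat{\mathcal A}_\delta$ is empty, the argument above shows on $G$ that there are no admissible pairs either, so the convention $\min\emptyset=+\infty$ is consistent.

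\textbf{The PAC-safe corrector.} This statement follows by monotonicity. On $G$, the condition $\xi+2\tau<\underline\Omega^{\Domain}_{W\mid\Audit}(\delta)$ implies the hypothesis of Theorem~\ref{thm:deterministic}, so its conclusion holds with probability at least $1-\alpha$.
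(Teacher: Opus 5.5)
Your proof is correct and follows the paper's argument essentially step for step: Hoeffding's inequality with a union bound over the $NM$ anchor--probe means, then a nearest-anchor reduction that places the anchors of every admissible pair in $\widehat{\mathcal A}_\delta$, then the observation $d_W\le d_\Audit$ (from $W\subseteq\Audit$) so that the same $\eta$-net also controls the deployment metric. Your explicit handling of the empty-set conventions, and of the gate condition on the good event, spells out what the paper leaves implicit.
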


Crucially, $\widehat{\mathcal A}_\delta$ is constructed entirely from observable empirical distances.  The resulting guarantee is rigorously established with respect to the declared audit bank, domain, and threshold; complete proofs appear in Appendix~\ref{app:proofs}.

\begin{corollary}[Joint calibration and rollout guarantee]\label{cor:joint}
Suppose the calibration statement in Theorem~\ref{thm:deterministic} holds with probability at least $1-\beta$ on validation data independent of the audit rollouts.  Then the safety conclusion obtained via \eqref{eq:pac} holds jointly with probability at least $1-\alpha-\beta$.
\end{corollary}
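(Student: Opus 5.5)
The plan is a union bound over the two failure events, followed by a deterministic chaining of Theorem~\ref{thm:pac} into Theorem~\ref{thm:deterministic}. Let $G_{\mathrm{roll}}$ be the event on which \eqref{eq:pac} holds, i.e.\ $\Omega^{\Domain}_{W\mid\Audit}(\delta)\ge\underline\Omega^{\Domain}_{W\mid\Audit}(\delta)$. Theorem~\ref{thm:pac} gives $\Pr(G_{\mathrm{roll}}^c)\le\alpha$ under Assumption~\ref{assump:generative}. Let $G_{\mathrm{cal}}$ be the event that $\psi_W$ has uniform calibration error at most $\tau$ on $\pi^{-1}(\Domain)$; by hypothesis $\Pr(G_{\mathrm{cal}}^c)\le\beta$. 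The union bound then gives $\Pr(G_{\mathrm{roll}}\cap G_{\mathrm{cal}})\ge 1-\alpha-\beta$. This step needs no independence at all; the independence hypothesis matters only for the validity of each marginal statement, as explained below.

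Next I would show that on $G_{\mathrm{roll}}\cap G_{\mathrm{cal}}$ the safety conclusion holds deterministically. Suppose the certified test $\xi+2\tau<\underline\Omega^{\Domain}_{W\mid\Audit}(\delta)$ passes, where $\xi$ is the observed preservation error of $Q$. On $G_{\mathrm{roll}}$ this implies $\xi+2\tau<\Omega^{\Domain}_{W\mid\Audit}(\delta)$. On $G_{\mathrm{cal}}$ the calibration hypothesis of Theorem~\ref{thm:deterministic} is in force. That theorem then yields $d_\Audit(\pi(Q(h)),\pi(h))<\delta$ for all $h\in\pi^{-1}(\Domain)$, and, under a completeness modulus, the full-future bound $\rho_\Audit(\delta)$.

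The main obstacle is making sure the two probability statements refer to quantities that are not chosen in a data-dependent way that would break either marginal guarantee. Theorem~\ref{thm:pac} needs $W$, the anchors and the other protocol parameters fixed before the rollouts are observed, which Assumption~\ref{assump:generative} provides; adaptive choice of $W$ is deferred to Corollary~\ref{cor:cegis-reuse}. The calibration guarantee must be computed on validation data independent of the audit rollouts, so that the rollouts do not affect whether it holds. The quantities entering the test must also be fixed with respect to the rollouts: the bound $\tau$ comes from the validation data, and $\xi$ comes from the corrector and the calibrated estimator. So conditioning on the validation data leaves the rollout law unchanged, and the rollout guarantee holds conditionally as well as marginally.

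I would therefore state explicitly that $\tau$, $\xi$ and $Q$ are measurable with respect to the validation data and the pre-declared protocol, and not with respect to the audit rollouts. Under that condition the union bound argument above is valid without any joint-distribution assumption. Independence would only allow the sharper product bound $(1-\alpha)(1-\beta)$, which is not needed here.
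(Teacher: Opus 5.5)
Your proposal is correct and matches the paper's proof, which intersects the calibration event with the Hoeffding event of Theorem~\ref{thm:pac} and applies a union bound. You make explicit what the paper leaves implicit: the deterministic chaining into Theorem~\ref{thm:deterministic}, and the observation that the union bound itself needs no independence.
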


To determine the data requirement for non-trivial certification, suppose the audit metric on $\Domain$ has covering number $N(\Domain,d_\Audit,\eta)\le C_\State\eta^{-k}$ and a strict margin $\Omega^{\Domain}_{W\mid\Audit}(\delta/2)\ge\Omega>0$ exists.

\begin{theorem}[Audited rollout complexity]\label{thm:complexity}
Taking $\eta\le q/8$ and $e_R\le q/16$ for $q=\min\{\delta,\Omega\}$ ensures $\underline\Omega^{\Domain}_{W\mid\Audit}(\delta)\ge\Omega/2$.  Under this covering bound, the total number of individual probe outcomes obeys
\[
 B=\widetilde O\!\left(C_\State Mq^{-(k+2)}\right),
\]
which simplifies to $\widetilde O(C_\State M\Omega^{-(k+2)})$ in the small-margin regime $\Omega\le\delta$.
\end{theorem}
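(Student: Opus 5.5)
The plan is to work on the high-probability event $\mathcal E$ from Theorem~\ref{thm:pac}, on which $\abs{\widehat\phi_u(s_i)-\phi_u(s_i)}\le e_R$ for all $N M$ anchor--probe pairs. This event comes from Hoeffding's inequality and a union bound, which is exactly where $e_R=\sqrt{\log(2NM/\alpha)/(2R)}$ comes from. On $\mathcal E$ every empirical distance is within $2e_R$ of the true one: $\abs{\widehat d_\Audit(i,j)-d_\Audit(s_i,s_j)}\le 2e_R$, and likewise for $d_W$. The proof then has two parts. First, a deterministic lower bound on $\widehat\Omega_{W\mid\Audit}(\delta)$ in terms of $\Omega$. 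Second, a count of the rollouts needed to meet the parameter constraints.

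\textbf{Step 1 (membership in the active set).} Take any empirically active pair $(i,j)\in\widehat{\mathcal A}_\delta$. Then $d_\Audit(s_i,s_j)\ge \widehat d_\Audit(i,j)-2e_R\ge\delta-2\eta-4e_R$. With $\eta\le q/8\le\delta/8$ and $e_R\le q/16\le\delta/16$, this gives $d_\Audit(s_i,s_j)\ge\delta-\delta/4-\delta/4=\delta/2$. So $(s_i,s_j)$ is a pair admissible in the infimum defining $\Omega^{\Domain}_{W\mid\Audit}(\delta/2)$; the anchors lie in $\Domain$ by Assumption~\ref{assump:generative}. Hence $d_W(s_i,s_j)\ge\Omega$, and so $\widehat d_W(i,j)\ge\Omega-2e_R$.

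\textbf{Step 2 (nonemptiness and the certificate).} I need $\widehat{\mathcal A}_\delta\neq\emptyset$; otherwise the minimum is $+\infty$, which is vacuous but harmless. If $\Domain$ has audit diameter at least $\delta$, take a pair realizing it and snap both points to their nearest anchors. The resulting anchor pair has true audit distance at least $\delta-2\eta$, and therefore empirical distance at least $\delta-2\eta-2e_R$, so it is active. Combining with Step 1, $\widehat\Omega\ge\Omega-2e_R$. Then
\[
\underline\Omega^{\Domain}_{W\mid\Audit}(\delta)\ge\Omega-4e_R-2\eta\ge\Omega-\tfrac{q}{4}-\tfrac{q}{4}\ge\Omega/2,
\]
using $q\le\Omega$.

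\textbf{Step 3 (budget).} Choose $\eta=q/8$. The covering bound then allows $N\le C_\State(8/q)^k$. Choose the smallest $R$ with $e_R\le q/16$, namely $R=\lceil 128\log(2NM/\alpha)/q^2\rceil$. The total number of outcomes is $B=NMR=O\!\left(C_\State M q^{-k}\cdot q^{-2}\log(2NM/\alpha)\right)$. The logarithmic factor is $O(\log(C_\State M/(q\alpha)))$ and is absorbed into $\widetilde O$. When $\Omega\le\delta$ we have $q=\Omega$, which gives the simplified form.

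\textbf{Main obstacle.} The delicate step is Step 1: choosing the constants so that the empirical threshold $\delta-2\eta-2e_R$, after the additional $2e_R$ slack, still lands above $\delta/2$, which is what forces the margin assumption at $\delta/2$ rather than $\delta$. A related subtlety is that the $\Omega/2$ conclusion holds on the event $\mathcal E$, i.e.\ with probability at least $1-\alpha$, and the statement should be read that way. Nonemptiness also requires the audit-diameter assumption already stated after \eqref{eq:omega}.
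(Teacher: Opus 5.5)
Your proposal is correct and follows the paper's proof essentially line for line. It uses the same bound $\delta-2\eta-4e_R\ge\delta/2$ for empirically admitted pairs, invokes the margin hypothesis at $\delta/2$ to get $\widehat d_W\ge\Omega-2e_R$ and hence $\underline\Omega\ge\Omega/2$, and counts the budget as $B=NMR$ with $N=O(C_\State q^{-k})$ and $R=O(q^{-2}\log(NM/\alpha))$. Your extra remarks are accurate points the paper leaves implicit: the conclusion holds on the Hoeffding event, and an empty active set makes the certificate trivially $+\infty$.
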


The exponent decomposes naturally: $k$ powers cover the intrinsic predictive state space, while two powers estimate bounded scalar means.  The linear factor $M=\abs\Audit$ reflects the verification scope against which the compact deployment bank is audited.  Importantly, this sample complexity is not an artifact of loose analysis, but fundamental to distribution-free separation testing:

\begin{theorem}[Minimax lower bound]\label{thm:lower}
Fix $\delta_0=1/4$.  For every $k\ge1$ and $L>0$, constants $c_k,\Omega_0>0$ exist such that, for $0<\Omega\le\Omega_0$, any adaptive algorithm distinguishing between $H_0: \Omega_\Phi(\delta_0)\ge2\Omega$ and $H_1: \Omega_\Phi(\delta_0)=0$ with error at most $1/3$ over $L$-Lipschitz maps $\Phi:[0,1]^k\to[0,1]^k$ requires $B\ge c_kL^k\Omega^{-(k+2)}$ Bernoulli coordinate observations.
\end{theorem}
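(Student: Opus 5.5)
We prove Theorem~\ref{thm:lower} with a Le Cam style two-point (null versus mixture) argument, in the spirit of minimax lower bounds for testing Lipschitz functions.

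\textbf{Null and alternatives.} Under $H_0$ we take a fixed $L$-Lipschitz map $\Phi_0:[0,1]^k\to[0,1]^k$ with a uniform margin. One natural choice is a scaled affine map $\Phi_0(s)=\tfrac14+c\,s$ with $c\le \min\{L,1/2\}$. For this map the first coordinate $\Phi_0^1$ alone is the Bernoulli mean that the algorithm may query. We need its separation to satisfy $\Omega_{\Phi_0}(\delta_0)\ge 2\Omega$ whenever $\Omega\le\Omega_0$. Here we read $\Omega_\Phi(\delta_0)$ as the infimum of the probe distance over pairs that are $\delta_0$-separated in the reference metric; we fix the ambient metric to make this interpretation precise.

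Under $H_1$ we partition $[0,1]^k$ into $n\asymp (L/\Omega)^k$ cells of side $h\asymp\Omega/L$. For each cell $j$ we build a perturbation $\Phi_j=\Phi_0+\Omega\,\psi_j$, where $\psi_j$ is a Lipschitz bump of height $\asymp1$ and width $h$ supported in cell $j$. The bump is chosen so that $\Phi_j$ maps a point $s_j$ of the cell onto the image of a point $s_j'$ at reference distance $\ge\delta_0$; this produces a collision that drives the margin to $0$. Since $\Phi_0$ already moves by only $O(\Omega)$ over distance $O(\Omega/L)$ in the probe coordinates, a height-$\Omega$ bump is enough to equalize the two values. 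The Lipschitz constant stays $\lesssim L$ because the ratio of height to width is $\Omega/h\asymp L$. This collision construction is the main obstacle: the perturbation must create an exact alias while remaining in $[0,1]^k$ and $L$-Lipschitz, and the unperturbed map must keep margin $2\Omega$. If needed, I would let $\Phi_0$ be flat in the probe coordinate at scale $\Omega$, so that a small bump in a single coordinate suffices.

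\textbf{Mixture and divergence bound.} The prior draws $J$ uniformly from $[n]$ and uses $\Phi_J$. An observation at query point $s$ and coordinate $\ell$ is a draw of $\mathrm{Bernoulli}(\Phi^\ell(s))$. Because the means are bounded away from $0$ and $1$, one observation inside cell $j$ has KL divergence at most $C\Omega^2$ between $\Phi_j$ and $\Phi_0$, and observations outside cell $j$ contribute nothing. By the chain rule for adaptive sampling,
\[
\mathrm{KL}(P_0\,\|\,P_j)\le C\Omega^2\,\E_0[B_j],
\]
where $B_j$ counts the queries landing in cell $j$. We then use the standard averaging bound for testing against a uniform mixture with disjoint supports. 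One can either bound the $\chi^2$ divergence of the mixture, or argue that the test must in effect locate the perturbed cell, so that by convexity
\[
\mathrm{TV}\bigl(P_0,\tfrac1n\textstyle\sum_j P_j\bigr)\le \tfrac1n\sum_j\sqrt{\mathrm{KL}/2}\le\sqrt{C\Omega^2 B/(2n)}.
\]
An error of at most $1/3$ forces this total variation to be a constant, hence $B\gtrsim n\,\Omega^{-2}\asymp L^k\Omega^{-(k+2)}$. This gives the claimed bound with $c_k$ depending on the bump shape and on $k$, and $\Omega_0$ chosen small enough that $h\le1$ and that all means stay in $[1/8,7/8]$.
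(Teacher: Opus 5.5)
\textbf{Assessment: genuine gap in the construction.} Your information-theoretic half is sound and essentially equivalent to the paper's. The paper picks $j^*$ with $\E_0 N_{j^*}\le B/M_0$ and runs a two-point Le Cam argument between $P_0$ and $P_{j^*}$. You instead average over a uniform mixture using convexity of total variation, Pinsker, and Jensen. Both give $B\gtrsim n\,\Omega^{-2}$, and the $\chi^2$ detour is unnecessary.

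The gap is in the null/alternative construction. You flag it as the main obstacle but do not resolve it, and your stated choice fails. With $\Phi_0(s)=\tfrac14+cs$ and $c$ a fixed constant (you impose only $c\le\min\{L,1/2\}$ and then let $\Omega\to0$), an alias needs a point $s_j$ in cell $j$ and a partner $s_j'$ with $\|s_j-s_j'\|_\infty\ge\delta_0$. Since $h<\delta_0$, the partner lies outside the cell and is unperturbed. The bump must therefore close the whole gap $\|\Phi_0(s_j)-\Phi_0(s_j')\|_\infty\ge c\,\delta_0$, which is of constant order. That forces bump height $\Theta(1)$, hence width $\Theta(1/L)$ to stay Lipschitz, hence $O(1)$ cells and $\Theta(1)$ KL per query: a trivial bound. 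Your remark that $\Phi_0$ moves only $O(\Omega)$ over distance $O(\Omega/L)$ conflates the bump width with the collision distance. What matters is how far $\Phi_0$ moves over distance $\delta_0$. The constraints in fact pin this down. The null gap at the chosen pair must be at least $2\Omega$ (membership in $H_0$) and at most the bump height, and the bump height must be $O(\Omega)$ to get per-query KL $O(\Omega^2)$. So the null's margin must be \emph{tight} at order $\Omega$.

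\textbf{How the paper closes the gap.} It takes slope $8\Omega$: $\Phi^{(0)}_\ell(s)=\tfrac12+8\Omega(s_\ell-\tfrac12)$. Then $\Phi^{(0)}(t)-\Phi^{(0)}(t-\delta_0e_1)=2\Omega e_1$ exactly, so $\Phi^{(0)}\in H_0$ with margin exactly $2\Omega$, and it is $8\Omega\le L/4$-Lipschitz once $\Omega\le L/32$. With $h=4\Omega/L$, the tent $b_j(s)=2\Omega\,[1-\|s-t_j\|_\infty/h]_+$ is $L/2$-Lipschitz. Subtracting it from coordinate one gives the exact collision $\Phi^{(j)}(t_j)=\Phi^{(j)}(t_j-\delta_0e_1)$. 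This requires two details you also leave out:
\begin{itemize}
\item the displacement is along the perturbed coordinate, so the other coordinates of the coordinatewise-affine null agree;
\item centers are placed in the strip $t_{j,1}>\delta_0+h$, so the partner stays in $[0,1]^k$ and outside the bump. This costs only a constant factor in the $\asymp(L/\Omega)^k$ count.
\end{itemize}
Your fallback of making $\Phi_0$ ``flat at scale $\Omega$'' points in this direction. However, a literally flat null violates $H_0$; what is needed is slope exactly of order $\Omega$ with the cancellation above.

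\textbf{Minor point.} The algorithm may query any coordinate, not just the first. This is harmless, because unperturbed coordinates contribute zero KL.
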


Here $\Omega_\Phi(\delta)=\inf_{\norm{s-s'}_\infty\ge\delta}\norm{\Phi(s)-\Phi(s')}_\infty$.  Appendix~\ref{app:proofs} provides the localized bump construction and change-of-measure proof.

\subsection{Closed-Loop Interface Synthesis via PF-CEGIS}\label{sec:cegis}

Theorem~\ref{thm:pac} guarantees safety for any fixed deployment bank $W\subseteq\Audit$.  However, if an initial $W$ lacks distinguishing future probes, predictive states will alias together ($d_W(s,s')\approx 0$), driving the certified margin $\underline\Omega^{\Domain}_{W\mid\Audit}$ negative and blocking corrector authorization.  To resolve this, audited Predictive-Fiber CEGIS (PF-CEGIS) dynamically refines $W$ guided by certified counterexamples.  Starting from $W_0\subseteq\Audit$, at iteration $t$:

\begin{enumerate}
\item estimate audit and deployment distances with simultaneous confidence intervals;
\item find an anchor pair certified far in $d_\Audit$ but insufficiently separated under $W_t$;
\item select the probe $u^*\in\Audit\setminus W_t$ that maximizes separation for this alias pair;
\item update $W_{t+1}=W_t\cup\{u^*\}$ and recompute the certified lower margin $\underline\Omega^{\Domain}_{W_{t+1}\mid\Audit}(\delta)$;
\item permit correction only when the learned corrector's measured distortion satisfies $\xi+2\tau<\underline\Omega^{\Domain}_{W_{t+1}\mid\Audit}(\delta)$.
\end{enumerate}

Because concentration holds uniformly across all subsets of a fixed, pre-declared audit bank $\Audit$ (Corollary~\ref{cor:cegis-reuse}), selecting successive deployment subsets requires zero additional rollouts.  At the population level, the procedure terminates in at most $M$ steps with $W=\Audit$ as the canonical terminal representation.  In summary, PF-CEGIS provides a principled, counterexample-guided mechanism to synthesize a compact, certifiable deployment interface with provable safety guarantees against the declared audit family.

\section{Empirical Validation}\label{sec:experiments}

All experiments are fully reproducible and emphasize direct quantification of semantic safety alongside task performance.  The controlled synthetic studies are generated by \texttt{experiments/run\_all.py}; the MuJoCo physical generative certificate, UCI HAR audit, and digit stream by \texttt{run\_mujoco\_generative\_pac.py}, \texttt{run\_har\_modern.py}, and \texttt{run\_sequential\_digits.py}.  Every stochastic rollout experiment records literal Bernoulli probe outcomes while leveraging binomial sufficient statistics for exact, efficient simulation.  Table~\ref{tab:main-metrics} summarizes the complete suite of results under our safety-first evaluation paradigm.

\begin{table*}[t]
\centering
\footnotesize
\setlength{\tabcolsep}{2pt}
\caption{Comprehensive evaluation under the safety-first paradigm.  Nonanalytic margins are evaluated relative to their pre-declared correction domains.  ``Known quotient'' studies feature analytic ground-truth semantics.  The HAR and digit benchmarks are reported as held-out empirical audits, distinguishing passive observational datasets from resettable generative PAC certificates.}
\label{tab:main-metrics}
\begin{tabular}{lcccccc}
\toprule
Study & task error & $\underline\Omega_{W\mid\Audit}$ & $\tau$ & $\xi$ & cert. distortion & normal contraction \\
\midrule
DFA hard snap & $\DFAHardError$ & analytic & $0$ & $0$ & $0$ (basin) & $\DFAHardContraction$ \\
Analog fiber & $\FiberSemanticRMSE$ & analytic & $0$ & $0$ & $0$ & $1.000$ \\
Mixed fiber & $z$ $\MixedFiberZRMSE$ & analytic & $0$ & $0$ & $0$ & $1.000$ \\
Rollout certificate & --- & $\MinimumCertificate$ & $\CertificationTau$ & $0$ & $<\delta$ if gated & --- \\
PF-CEGIS final & --- & $\RefinedCEGISCertificate$ & $\CEGISTau$ & $\CEGISXi$ & $\le\delta$ & $1.000$ \\
Re-anchoring & law $\ReanchoringLawError$ & --- & $0$ & $0$ & observation & --- \\
MuJoCo generative PAC & gate pass & $\MujocoCertificate$ & $0.141$--$0.168$ & $\MujocoPFXi$ & $<0.80$ & $\MujocoNormalContraction$ \\
MuJoCo double pendulum & gate pass & $\DoubleCertificate$ & $0.135$--$0.183$ & numerical & $<0.80$ & $0.220$ \\
HAR modern PF (held-out) & GRU $\HARModernGRUError$ & passive & passive & $\HARModernPFXi$ & empirical & $\HARModernContraction$ \\
Digits PF (held-out) & $\DigitsPFError\pm\DigitsPFErrorStd$ & $\widehat\Omega=\DigitsEnrichedMargin$ & passive & $0$ & $\DigitsPFInterface$ & $\DigitsPFContraction$ \\
\bottomrule
\end{tabular}
\end{table*}

\begin{figure*}[t]
 \centering
 \includegraphics[width=0.92\textwidth]{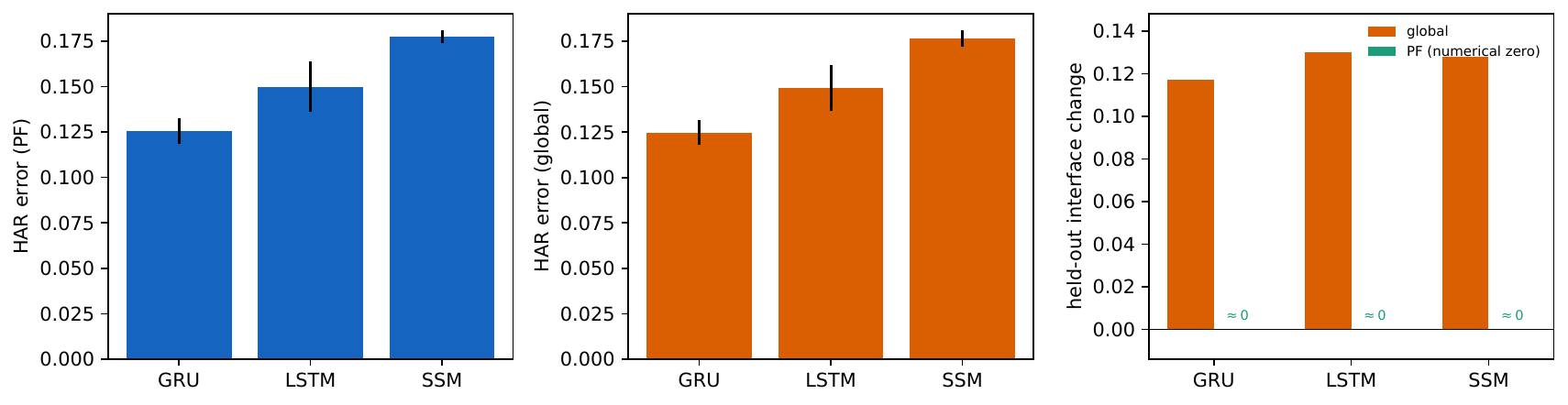}
\caption{Subject-disjoint UCI HAR evaluation under hidden-state noise (mean $\pm$ s.d., \HARModernSeeds\ seeds per architecture).  The third panel deliberately draws global and PF values side-by-side: PF interface change is numerical zero ($\approx10^{-15}$), whereas global contraction changes every declared head.}
 \label{fig:har}
\end{figure*}

\subsection{Conceptual Geometry: Discrete Snapping vs.\ Continuous Preservation}

We first validate the geometric foundations established in Section~\ref{sec:geometry} across three controlled regimes: finite discrete codes, continuous analog memory, and hybrid state spaces.

In discrete state spaces, finite predictive equivalence classes permit robust basin snapping.
In an eight-state DFA with deterministic transitions, we inject recurrent hidden noise of $\sigma=0.23$ per step.  As shown in Figure~\ref{fig:dfa}, nearest-state hard snapping achieves low state error ($\DFAHardError$), dramatically outperforming both uncorrected execution ($\DFAUncorrectedError$) and a noise-trained soft recurrent baseline ($\DFASoftError$).  This confirms that discrete predictive codes admit isolated, positive-radius basins of attraction where aggressive snapping successfully restores exact ground truth.

\begin{figure}[t]
 \centering
 \includegraphics[width=\linewidth]{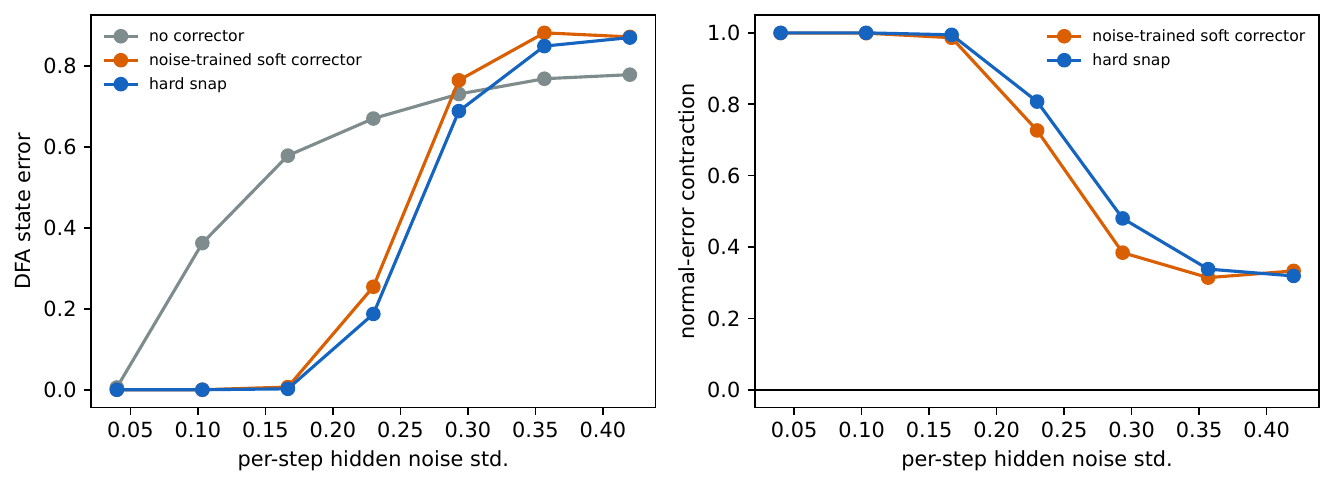}
 \caption{Finite predictive states permit strong discrete correction.  Hard snapping is effective inside its basin; the learned soft corrector is a noise-trained recurrent baseline.}
 \label{fig:dfa}
\end{figure}

Unlike discrete automata, continuous analog memory manifolds require tangent preservation to avoid semantic drift.
To demonstrate the danger of global contraction, we consider an analytic one-dimensional integrator realization $h=(z,r_1,r_2)$, where $z$ represents continuous memory and $r_1,r_2$ are deliberately injected nonpredictive fiber noise.  The fiber corrector applies a local multiplier spectrum of $(1,0,0)$, projecting noise along the fiber, whereas global contraction applies isotropic decay $(0.78, 0.78, 0.78)$.  Figure~\ref{fig:analog} illustrates the central tangent--normal distinction: while both methods suppress fiber noise, global contraction contracts the memory coordinate, increasing semantic RMSE from $\FiberSemanticRMSE$ to $\GlobalSemanticRMSE$.  In contrast, the fiber corrector drives fiber-noise RMSE to $\FiberNormalRMSE$ with zero semantic bias.

\begin{figure}[t]
 \centering
 \includegraphics[width=\linewidth]{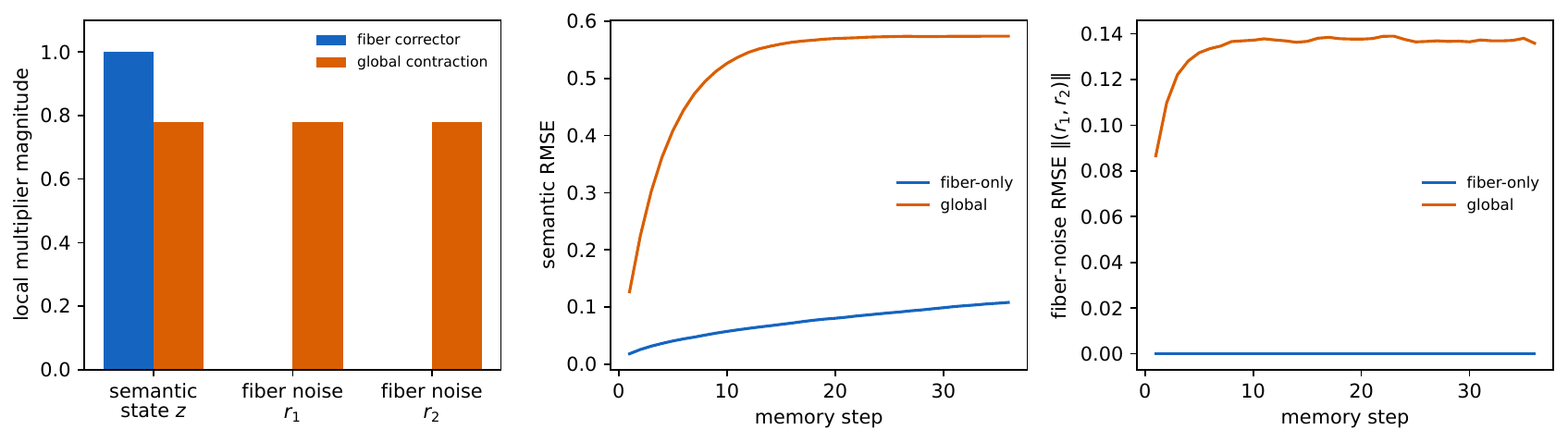}
 \caption{Continuous integrator realization $h=(z,r_1,r_2)$.  The two $r$ coordinates are intentionally nonpredictive fiber noise, not additional memories.  Fiber-only correction preserves $z$ and contracts $(r_1,r_2)$; global contraction contracts all three coordinates and therefore biases analog memory.}
 \label{fig:analog}
\end{figure}

In hybrid discrete--continuous architectures, the state decomposes into distinct topological components.
In a mixed state $s=(q,z)$ comprising a 4-valued categorical mode $q$ and a continuous coordinate $z$, the optimal corrector snaps the one-hot mode, preserves $z$, and eliminates two transverse noise coordinates.  Figure~\ref{fig:mixed} demonstrates exact mode recovery alongside $z$ RMSE of $\MixedFiberZRMSE$, whereas global contraction yields $\MixedGlobalZRMSE$ and unnecessary $z$ quantization yields $\MixedQuantizedZRMSE$.  Together, these experiments confirm that strong contraction is semantically sound across discrete modes, but must strictly vanish along continuous predictive coordinates.

\begin{figure}[t]
 \centering
 \includegraphics[width=\linewidth]{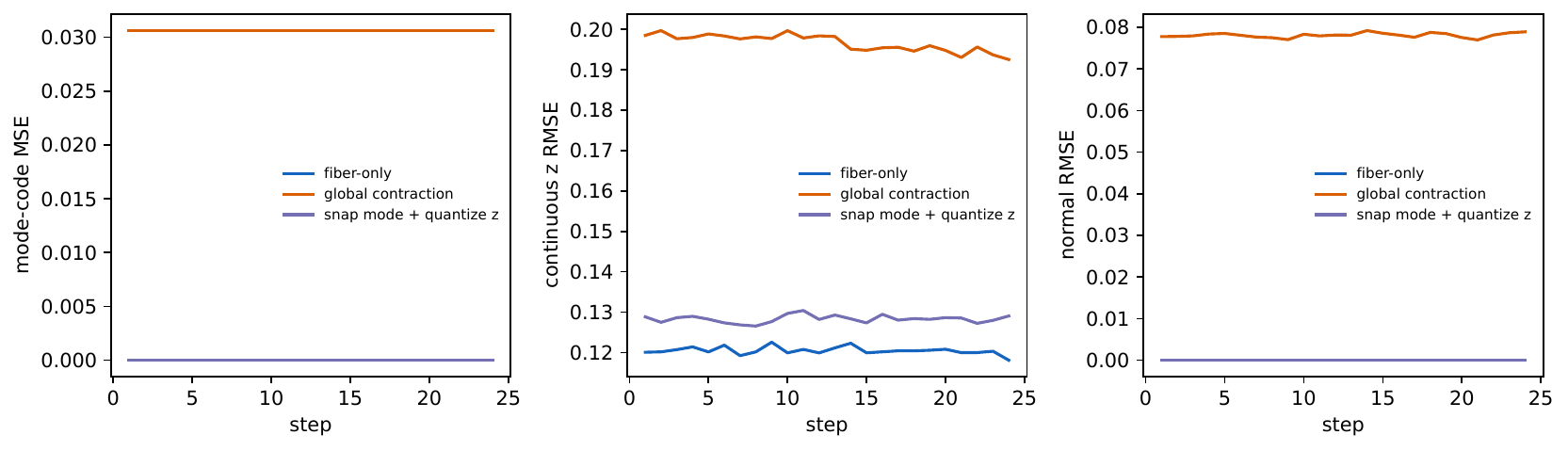}
 \caption{Hybrid predictive state $s=(q,z)$.  The mode is safely snapped, whereas the continuous coordinate must remain a semantic degree of freedom.}
 \label{fig:mixed}
\end{figure}

\subsection{Statistical Certification and Closed-Loop Refinement}

We next evaluate whether the finite-rollout PAC certificate reliably authorizes safe state corrections, and whether PF-CEGIS dynamically repairs uncertified deployment banks.

Evaluating statistical certificate scaling under finite sampling confirms our theoretical sample complexity bounds.
On the synthetic domain $s\in[0,1]^k$, the audit bank contains strong coordinates $0.1+0.8s_j$ and the deployment bank contains weak coordinates $0.5+(0.8\Omega/\delta)(s_j-1/2)$, so the ground-truth margin equals $\Omega$.  Varying rollout budget $B$, intrinsic dimension $k$, and true margin $\Omega$ with independent Bernoulli rollouts ($\alpha=0.05$), Figure~\ref{fig:scaling} demonstrates that the certified lower bound is negative at underfunded budgets ($\LowBudgetCertificate$) and turns positive once the sufficient schedule is reached ($\HighBudgetCertificate$).  The empirical sample complexity exponents are $\ScalingExponentOne$ for $k=1$ and $\ScalingExponentTwo$ for $k=2$, closely matching the theoretical $k+2$ minimax rate of Theorem~\ref{thm:lower}.  Offline schedules, wall times, and the curse of predictive dimension are detailed in Appendix~\ref{app:efficiency} (Figure~\ref{fig:efficiency}).

\begin{figure}[t]
 \centering
 \includegraphics[width=\linewidth]{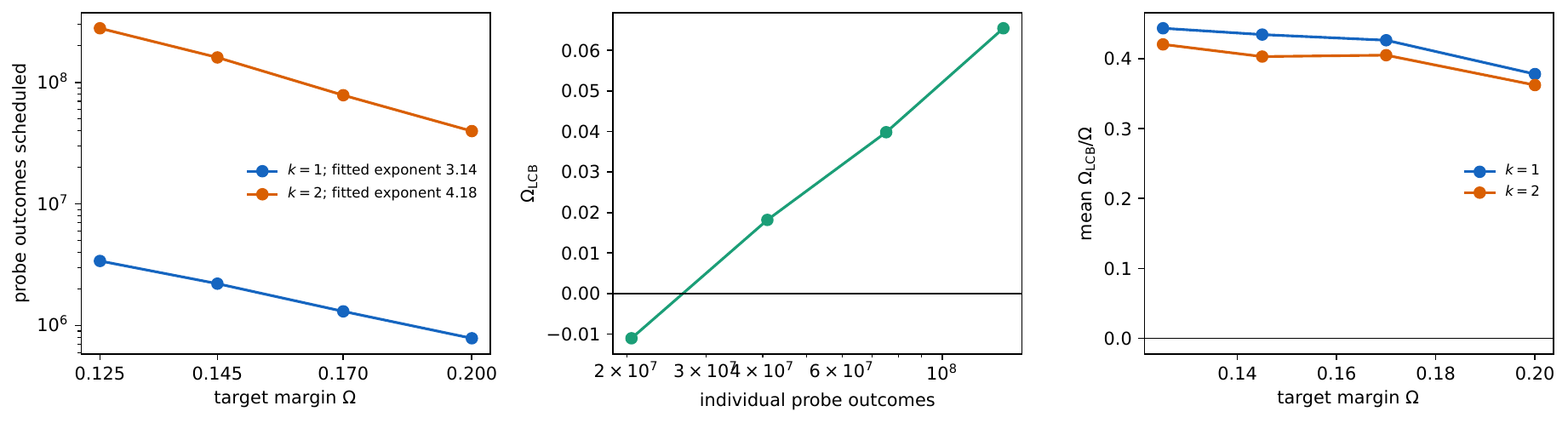}
 \caption{Finite-audit certification.  We vary margin, intrinsic dimension, and literal rollout budget, and report the lower certificate rather than only prediction accuracy.}
 \label{fig:scaling}
\end{figure}

To synthesize certifiable interfaces without manual trial-and-error, adaptive refinement via PF-CEGIS iteratively queries violating states.
Starting from a constant probe where all predictive directions alias together, PF-CEGIS iteratively detects distinguishing failures in $\Audit$ and incorporates the most separating probe into $W$.  Figure~\ref{fig:cegis} tracks this trajectory: the certified margin rises monotonically from $\InitialCEGISCertificate$ to $\RefinedCEGISCertificate$, while corrector semantic distortion falls from $\InitialAuditDistortion$ to $\FinalAuditDistortion$.  Crucially, the safety gate blocks correction until the final round when $\xi+2\tau<\underline\Omega$ is rigorously established.  Because audit rollouts are pre-declared, probe selection incurs zero additional simulation cost (Corollary~\ref{cor:cegis-reuse}).

\begin{figure}[!htbp]
 \centering
 \includegraphics[width=0.86\linewidth]{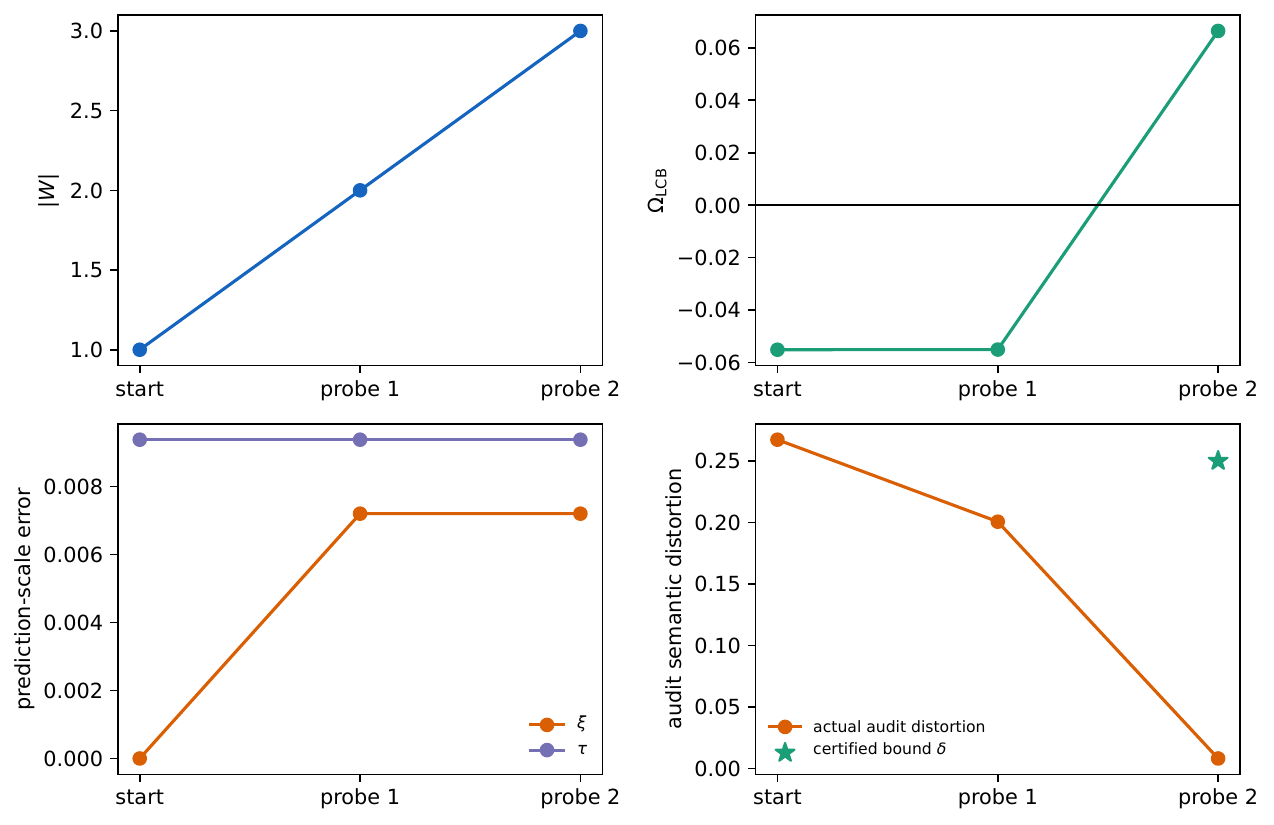}
 \caption{PF-CEGIS tracks the entire safety gate, not just its final probe count.  Adding probes makes the certificate stronger and narrows the information the corrector is allowed to delete.}
 \label{fig:cegis}
\end{figure}

Over extended operational horizons, uncorrected analog drift accumulates along neutral directions unless periodically re-anchored by sensory observations.
Fiber correction eliminates internal representational noise but cannot, by itself, eliminate genuine semantic drift along marginal memory directions.  To formalize this boundary, consider an error recurrence $e_{t+1}\le Le_t+b$, where $b$ accounts for unmodeled disturbance and certified correction distortion.  If $L<1$, internal contraction stabilizes memory with asymptotic error bounded by $b/(1-L)$.  However, for marginal continuous memory where $L=1$, persistent semantic drift is not removable by any internal semantics-preserving operation.  In such neutral memory regimes, stabilization requires external evidence: if informative observations re-anchor the state every $H$ steps with contraction factor $\kappa<1$, the pre-anchor error satisfies
\begin{equation}\label{eq:reanchor}
 \sup_t e_t \le \frac{Hb}{1-\kappa} + O(b).
\end{equation}
Figure~\ref{fig:reanchor} simulates this exact scalar recurrence across varying disturbance scales $b$ and observation intervals $H$, overlaying empirical steady-state errors with the theoretical bound \eqref{eq:reanchor}.  The maximum relative discrepancy is \ReanchoringLawError, demonstrating that internal fiber correction (which eliminates transversal representational noise) and sensory re-anchoring (which bounds longitudinal drift) operate as complementary stabilizing mechanisms.

\begin{figure}[!htbp]
 \centering
 \includegraphics[width=0.74\linewidth]{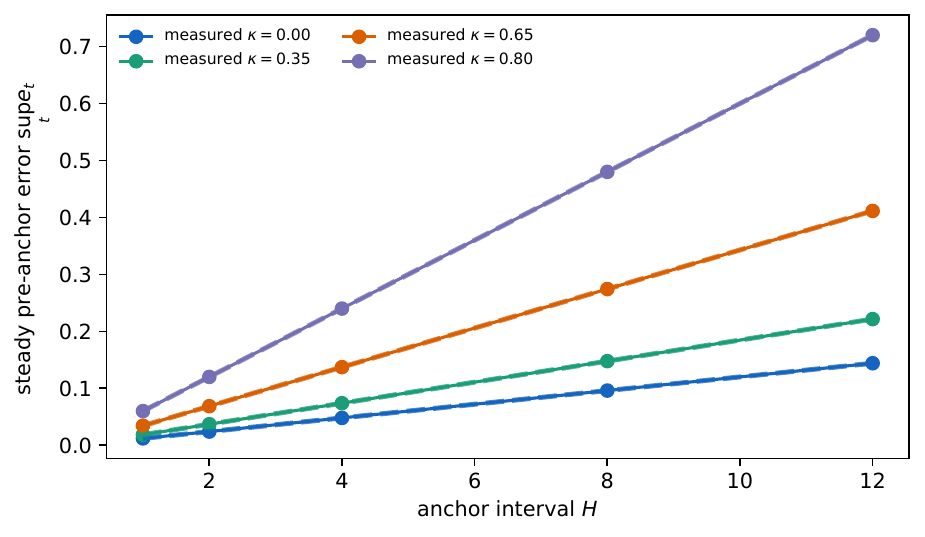}
\caption{Periodic informative observations re-anchor neutral continuous memory.  Solid lines are simulated steady pre-anchor errors; dashed lines are the theoretical $Hb/(1-\kappa)$ predictions.}
\label{fig:reanchor}
\end{figure}

\subsection{Physical Dynamics and Real-World Observational Streams}

Finally, we test the complete framework on physical continuous dynamics under exact resets and real sequential benchmarks under observational audits, demonstrating both active PAC certification and passive empirical verification.

Turning to continuous physical dynamics, we evaluate our generative certificate on the MuJoCo InvertedPendulum-v4 domain.
In the MuJoCo InvertedPendulum environment, we define an audit domain across a 75-anchor grid covering the continuous phase cylinder $(\theta, \dot\theta) \in [-\pi, \pi] \times [-8, 8]$ with exact physical state resets.  GRU, LSTM, and an input-selective SSM learn 9 randomized-control terminal events that evaluate whether the pole remains upright within finite horizons under stochastic control torques.  With $\delta=0.80$, $\alpha=0.05$, and $B=\MujocoBudget$ literal outcomes, fresh physical resets yield a certified lower margin $\underline\Omega=\MujocoCertificate$.  Disjoint calibration rollouts rigorously bound predictor errors $(\tau_{\rm GRU},\tau_{\rm LSTM},\tau_{\rm SSM})=(\MujocoGRUTau,\MujocoLSTMTau,\MujocoSSMTau)$, allowing all three distinct recurrent architectures to comfortably pass the safety gate $\xi+2\tau<\underline\Omega$.  Across all models, PF achieves minimal interface distortion $\xi\le\MujocoPFXi$ while aggressively contracting normal directions by $\MujocoNormalContraction$ (in sharp contrast to $\xi\ge\MujocoGlobalXi$ for isotropic global contraction).  Furthermore, physical PF-CEGIS successfully expands an uncertified 3-probe interface ($\MujocoCEGISInitial$) into a certified 9-probe bank ($\MujocoCEGISFinal$), and strictly positive safety margins are maintained on the more challenging 4D nonlinear InvertedDoublePendulum dynamics.  Detailed rollout distributions and calibration bounds confirm robust interface preservation across diverse dynamical regimes (Figure~\ref{fig:resettable-detail}).

\begin{figure}[t]
\centering
\includegraphics[width=\columnwidth]{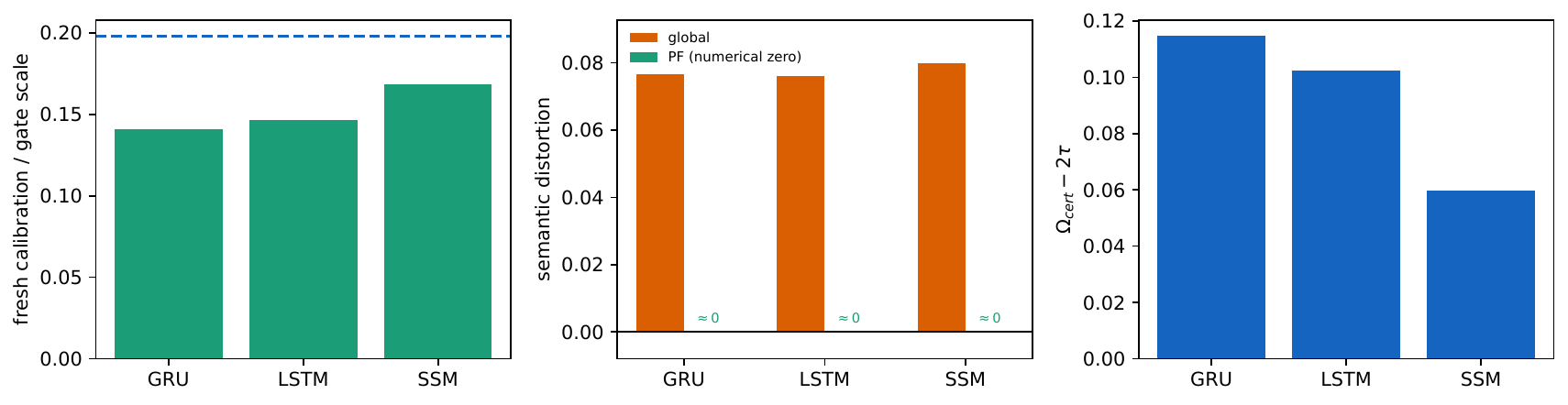}
\caption{MuJoCo physical generative certificate.  GRU, LSTM, and a selective SSM are trained on randomized-control rigid-body rollouts; disjoint exact-reset rollouts bound calibration and certify audit separation.  The middle panel compares global and PF distortion: PF achieves numerical zero distortion, whereas isotropic global contraction corrupts the declared interface.}
\label{fig:resettable-detail}
\end{figure}

Beyond active simulation resets, we evaluate observational audit preservation on official subject-disjoint splits of the UCI Human Activity Recognition (HAR) dataset.
Evaluating across official subject-disjoint splits of 128-step inertial sequences (Figure~\ref{fig:har}), GRU, LSTM, and SSM models were trained from scratch over \HARModernSeeds\ seeds.  The 6-class activity taxonomy (walking, walking upstairs, walking downstairs, sitting, standing, laying) tests whether predictive fibers generalize across anatomical and stylistic movement variations among unseen test subjects.  Under hidden-state noise ($\sigma=0.12$), PF preserves all declared 6-way linear activity heads to numerical precision ($\approx 10^{-15}$, $\xi=\HARModernPFXi$) while contracting the normal complement by $\HARModernContraction$.  In contrast, global contraction corrupts every declared output by at least $\HARModernGlobalXi$, degrading margin boundaries between static postures and dynamic gaits.  This held-out empirical audit confirms consistent fiber preservation on real human activity time series.

To examine decision-making under severe partial observability, we evaluate a finite-horizon POMDP that reveals handwritten digits pixel-by-pixel.
We evaluate a finite-horizon POMDP by serially revealing $8\times8$ optical handwritten digits pixel-by-pixel over 64 clocked steps.  Using a clocked 64-dimensional linear memory RNN, this structured setup isolates correction geometry from sequence optimization failures: early observations contain ambiguous pixel fragments, requiring the hidden state to integrate ambiguous evidence until distinctive structural strokes (e.g., loops in 0, 6, 8 or horizontal bars in 4, 5, 7) emerge.  With independent per-update noise ($\sigma=0.025$), PF achieves digit error $\DigitsPFError\pm\DigitsPFErrorStd$, significantly outperforming isotropic global contraction ($\DigitsGlobalError\pm\DigitsGlobalErrorStd$) and stable contractive recurrent baselines ($\DigitsSpectralError$).  It preserves the 14-head predictive interface (class probabilities and future pixel projections) to $\DigitsPFInterface$ while contracting normal directions by $\DigitsPFContraction$, demonstrating reliable semantic protection under severe partial observability (Figure~\ref{fig:digits-detail}).

\begin{figure}[t]
\centering
\includegraphics[width=\columnwidth]{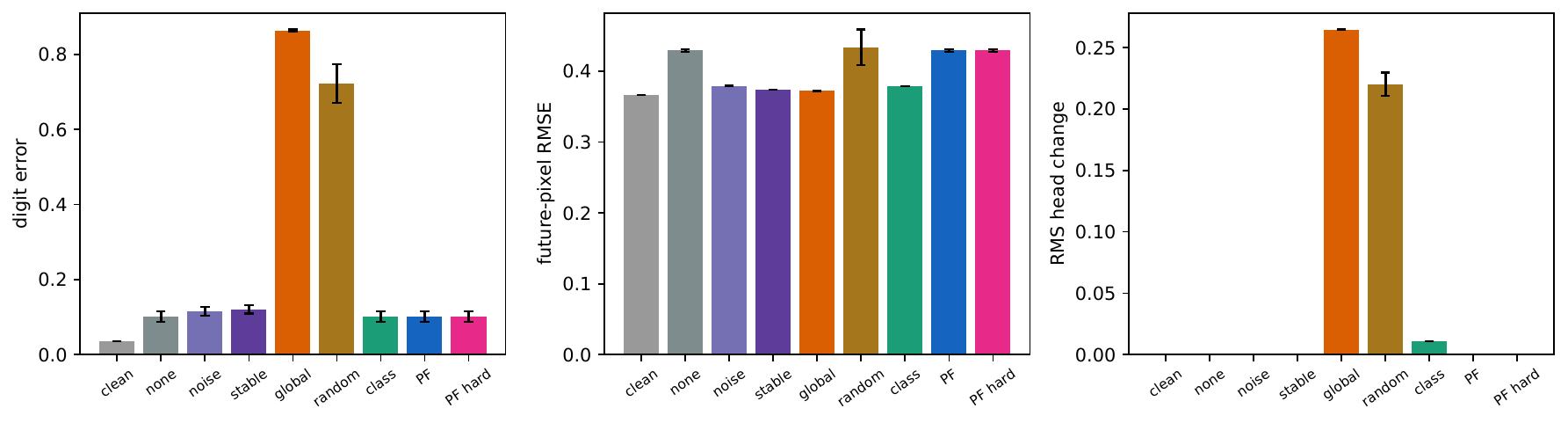}
\caption{Real raster-revealed digit stream under partial observability.  PF contracts normal directions while strictly preserving the declared class-and-future interface; global contraction corrupts class boundaries.}
\label{fig:digits-detail}
\end{figure}

\section{Discussion and Related Work}\label{sec:discussion}

The operational distinction between active generative access and passive observational audits is fundamental to empirical certification.
The PAC certification framework in Theorem~\ref{thm:pac} leverages resettable, independent probe access.  A natural question is whether one could dispense with resettable environments and establish distribution-free certificates from passive trajectory logs alone.  The following structural result demonstrates that active coverage is fundamentally indispensable:

\begin{proposition}[Coverage is necessary for global certification]\label{prop:coverage}
If a data-collection policy never visits a nonempty predictive region $A\subset\State$, then no estimator based exclusively on its trajectory rollouts can yield a valid distribution-free audit-separation certificate over all of $\State$.
\end{proposition}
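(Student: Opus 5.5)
The argument is an indistinguishability (two-worlds) construction. A valid distribution-free certificate must be a lower bound on $\Omega^{\State}_{W\mid\Audit}(\delta)$ that holds with probability at least $1-\alpha$ for \emph{every} admissible predictive model. I will exhibit two models that generate identical trajectory data under the given policy but have very different true margins. Any estimator then outputs the same certificate distribution in both worlds, so it must be trivial (for instance nonpositive or $-\infty$) in at least one of them.

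\textbf{Step 1 (Baseline world).} Take any admissible model $\Phi$ on $\State$ in which $A$ is separated: it contains states $s,s'$ with $d_\Audit(s,s')\ge\delta$ and $d_W(s,s')\ge\Omega>0$, and the global margin $\Omega^{\State}_{W\mid\Audit}(\delta)$ is at least some $\Omega>0$. A linear or Lipschitz map of the kind used in the scaling experiment works, arranged so that both $s$ and $s'$ lie in $A$.

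\textbf{Step 2 (Perturbed world).} Build $\Phi'$ that agrees with $\Phi$ on $\State\setminus A$ and on every outcome law there, but modifies the $W$-coordinates inside $A$ so that $\Phi'_W(s')=\Phi'_W(s)$. The audit coordinates on $s,s'$ are kept unchanged, so $d_\Audit(s,s')\ge\delta$ still holds while $d_W=0$. This forces $\Omega'^{\State}_{W\mid\Audit}(\delta)=0$. If continuity or Lipschitz regularity is part of the admissible class, I will make the modification with a localized bump supported in $A$, mirroring the construction behind Theorem~\ref{thm:lower}. Because the outcomes are bounded in $[0,1]$, the bump must keep all means inside $[0,1]$; placing the baseline means near $1/2$ handles this.

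\textbf{Step 3 (Equal data laws).} The policy never visits $A$, and the transition and observation laws are the same off $A$, so the law of the entire trajectory log is identical under $\Phi$ and $\Phi'$. This holds for adaptive policies as well, since each conditional step law is identical along every reachable history. Hence any certificate $\widehat c$ computed from the log has the same distribution in both worlds.

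\textbf{Step 4 (Contradiction).} Validity in world $\Phi'$ requires $\Pr[\widehat c\le 0]\ge1-\alpha$. By Step 3, $\widehat c\le0$ with the same probability in world $\Phi$, even though its true margin is $\Omega>0$. So the certificate cannot certify any positive margin in $\Phi$, which means it fails to be a nontrivial valid certificate over all of $\State$. Phrased as a two-point test, no estimator can do better than chance at distinguishing $\Omega\ge\Omega$ from $\Omega=0$, which is exactly the failure Proposition~\ref{prop:coverage} asserts.

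The main obstacle is Step 3. It requires that ``never visits $A$'' be policy-intrinsic, meaning that altering predictions on $A$ does not change the dynamics that keep trajectories outside $A$. The predictive map $\Phi$ is itself determined by the future dynamics, so changing $\phi_w$ on $A$ may require changing transitions out of $A$, which the unvisited trajectories never exercise. I would therefore define the perturbation at the level of the generative model, altering only transitions and outcome laws originating in $A$. I would then check that states outside $A$ whose futures can reach $A$ are handled, either by assuming $A$ is unreachable under the policy or by noting that the law of the logged outcomes depends only on visited states.
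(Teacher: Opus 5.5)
Your two-world construction is the same argument the paper gives: environments that agree off $A$, identical log laws under the policy, and different true margins. Your Step 4 is actually tighter than the paper's sketch. Two merely \emph{different} margins do not rule out a valid lower certificate, whereas pitting margin $0$ against margin $\Omega>0$ does.

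The concrete problem is in Step 2. Because $W\subseteq\Audit$, the $W$-coordinates are themselves audit coordinates. So you cannot move $\Phi_W(s)$ onto $\Phi_W(s')$ while keeping ``the audit coordinates on $s,s'$'' unchanged. Collapsing the $W$-coordinates necessarily changes $\Phi_\Audit(s)$, and $d_\Audit(s,s')\ge\delta$ survives only if that separation was witnessed by a probe outside $W$. The repair has two parts:
\begin{itemize}
\item freeze only the coordinates in $\Audit\setminus W$;
\item choose the baseline so that some $u\in\Audit\setminus W$ satisfies $\abs{\phi_u(s)-\phi_u(s')}\ge\delta$.
\end{itemize}
Your suggested scaling-experiment baseline already has this property, with the strong coordinates playing the role of $u$. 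The bump on the $W$-coordinates then gives $d_W(s,s')=0$ with $d_\Audit(s,s')\ge\delta$ intact.

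This repair requires $W\subsetneq\Audit$, and the hypothesis is not cosmetic. If $W=\Audit$, then $\Omega^{\State}_{\Audit\mid\Audit}(\delta)=\inf_{d_\Audit(s,s')\ge\delta}d_\Audit(s,s')\ge\delta$ in every world. The constant $\delta$ is then a valid, positive, data-free certificate, and the proposition is false in that case. You should state $W\ne\Audit$ explicitly; the paper's sketch silently needs it as well.

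A smaller simplification: only the perturbed endpoint has to lie in $A$. Taking $s\in A$ and $s'$ arbitrary removes the need for $A$ to contain two states at audit distance $\delta$. For a Lipschitz admissible class you still need $A$ to have nonempty interior, so that the bump can be supported inside it.
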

\begin{proof}
Construct two dynamical environments that agree identically on $\State\setminus A$ but induce distinct future conditional expectations on $A$.  They yield identical collected-data distributions under the given policy, yet possess different true separation margins on $\State$.  Hence, no purely observational statistic can separate them with uniform confidence.
\end{proof}

This structural barrier delineates the operational boundary between our resettable PAC guarantees (such as the MuJoCo rigid-body certificates) and passive observational evaluations (such as the UCI HAR and digit-stream benchmarks).  In offline observational settings, unvisited or rarely visited predictive regions harbor latent epistemic uncertainty that cannot be certified away without parametric or mixing assumptions.  When dependent sequential data are available, a standard $\beta$-mixing blocking argument replaces the literal rollout count $R$ by an effective number of approximately independent temporal blocks \citep{yu1994rates}, while lower bounds on anchor stationary occupancy are required to govern non-uniform sample complexity \citep{grinberg2018nonuniform}.

Our geometric framework bridges several foundational perspectives in representation learning, dynamical systems, and predictive-state representations.
Our framework bridges several foundational perspectives in representation learning and dynamical systems.  While predictive-state representations construct recurrent states from future statistics \citep{littman2001predictive,downey2017psrnn,venkatraman2017psd}, our work addresses the orthogonal problem of determining which internal state perturbations preserve those statistics under post-hoc correction.  Similarly, while stable recurrent architectures enforce contractive state dynamics \citep{miller2019stable}, our geometric analysis identifies the exact quotient directions where contraction is semantically destructive.  From an information-theoretic viewpoint, predictive rate--distortion bounds optimal compression for future prediction \citep{marzen2016rd,hahn2019nprd}; in contrast, our audited margin provides an operational decision boundary authorizing valid state corrections.  Finally, while continuous-attractor models explore tangent and transverse memory dynamics \citep{sagodi2024continuous} and empirical observability quantifies state distinguishability \citep{massiani2024observability}, we synthesize these insights into a certifiable, finite-sample verification framework.  By formalizing predictive fibers, we establish an auditable boundary that decouples internal error rejection from semantic memory preservation.

In practical real-world deployments, our framework provides modular, auditable safety guarantees.
In practical deployments, our audit framework provides transparent, modular guarantees: any certified claim clearly delineates its declared audit family, anchor coverage, predictor calibration $\tau$, and corrector distortion $\xi$.  By making these structural assumptions explicit, the framework ensures that safety verification is auditable, robust, and interpretable.  When an autonomous system operates in an uncertified state region, the framework refrains from making unfounded safety assertions, flagging the state for fallback or sensory re-anchoring rather than risking silent semantic corruption.

\section{Conclusion}\label{sec:conclusion}

The predictive quotient characterizes the fundamental geometric limit of what a recurrent representation may safely forget.  It establishes that contraction is permissible within predictive fibers, but strictly forbidden along future-distinguishing coordinates.  We demonstrate that finite stochastic rollouts provide statistically certified separation margins relative to a declared audit family, with sample complexity matching optimal minimax rates under Lipschitz regularity.  The foundational principle is clear:
\emph{A recurrent model may safely forget only what cannot change its future.}

\section*{Impact Statement}
This work advances the reliability and interpretability of recurrent models by providing a rigorous, auditable methodology for internal state correction and compression.  By formalizing predictive fibers and decoupling deployment inference from audit verification, the framework offers transparent safety criteria for autonomous systems, robotics, and sequential decision-making.  We emphasize that responsible deployment of these certificates requires careful validation of operational assumptions---including probe coverage, calibration bounds, and domain alignment.  Distinguishing between active generative certificates and passive empirical audits fosters robust, accountable evaluation practices in safety-critical machine learning.

\FloatBarrier\clearpage
\bibliographystyle{icml2026}
\bibliography{references}

\clearpage
\FloatBarrier\clearpage
\appendix
\renewcommand{\thetable}{A\arabic{table}}
\renewcommand{\thefigure}{A\arabic{figure}}
\setcounter{table}{0}
\setcounter{figure}{0}

\section{Offline Certification Cost and Proof Details}\label{app:efficiency}

\begin{table*}[t]
\centering
\scriptsize
\caption{Proof-assumption audit.  The final column characterizes structural failure modes and counterexample classes when premises are violated.}
\label{tab:proof-audit}
\begin{tabular}{p{0.20\textwidth}p{0.34\textwidth}p{0.34\textwidth}}
\toprule
Claim & Premise used in the proof & Failure when absent \\
\midrule
Quotient rank bound & $\pi$ is a $C^1$ rank-$k$ submersion locally and $Q(h)=h$ for the derivative identity. & At a non-fixed point, derivatives live at $h$ and $Q(h)$ and $D\pi_h(DQ_h-I)=0$ need not hold. \\
Finite embedding / all-future lift & Compact state manifold, $C^1$ tests separating points and tangents; additionally $\ell_\infty$ continuity of the full predictive map for a modulus. & Finite coordinate separation alone does not control an unmeasured future or supply a numerical conditioning bound. \\
PAC separation & Pre-declared $(\Domain,\Audit,\delta)$, an $\eta$-net of $\Domain$, conditional reset at each anchor, bounded independent outcomes, and $W\subseteq\Audit$. & Passive or adaptively selected trajectories can miss a region or invalidate Hoeffding/union-bound coverage. \\
Joint safety statement & Uniform calibration holds on the correction domain and is validated independently (or with a joint confidence budget). & A corrector can pass a margin measured with an overfit predictor while changing true deployment semantics. \\
$\Omega^{-(k+2)}$ upper bound & Covering-number bound and strict margin at $\delta/2$; $q=\min\{\delta,\Omega\}$. & A cover without a positive margin cannot yield a positive certificate at any finite sample size. \\
Minimax lower bound & Lipschitz alternatives remain in $[0,1]^k$, bump supports are disjoint, and Bernoulli means stay away from 0 and 1. & The localized alternatives may not be admissible, and the KL comparison no longer establishes the stated rate. \\
PF-CEGIS gate & The finite audit bank is fixed and fully sampled, so the base mean-concentration event is uniform over its subsets; corrector calibration is independent or jointly controlled. & Expanding the audit bank or adaptively validating a corrector without new or simultaneous data loses the nominal confidence level. \\
\bottomrule
\end{tabular}
\end{table*}
Figure~\ref{fig:efficiency} separates the offline costs that the theorem and the implementation impose.  The left panel instantiates the sufficient schedule at fixed $\Omega=0.25$: its fourth-dimensional instance already schedules \EfficiencyKFourBudget\ individual probe outcomes, illustrating the curse of intrinsic predictive dimension rather than a cost of the recurrent forward pass.  The center panel executes exact-reset MuJoCo audits on the 75-anchor physical domain.  Increasing the literal budget from $84{,}375$ to \EfficiencyPhysicalBudget\ outcomes requires \EfficiencyPhysicalLowSeconds--\EfficiencyPhysicalHighSeconds\ seconds on an Apple-silicon workstation, yielding a certified lower margin \EfficiencyPhysicalCertificate\ at the largest budget.  The right panel demonstrates the computational efficiency of reusing a fully sampled, fixed audit bank: selecting each successive PF-CEGIS deployment subset requires zero additional rollout evaluations, with post-audit selection executing in under \EfficiencyCEGISMilliseconds\ ms, precisely reflecting the theoretical reuse guarantee of Corollary~\ref{cor:cegis-reuse}.

\begin{figure}[t]
\centering
\includegraphics[width=\columnwidth]{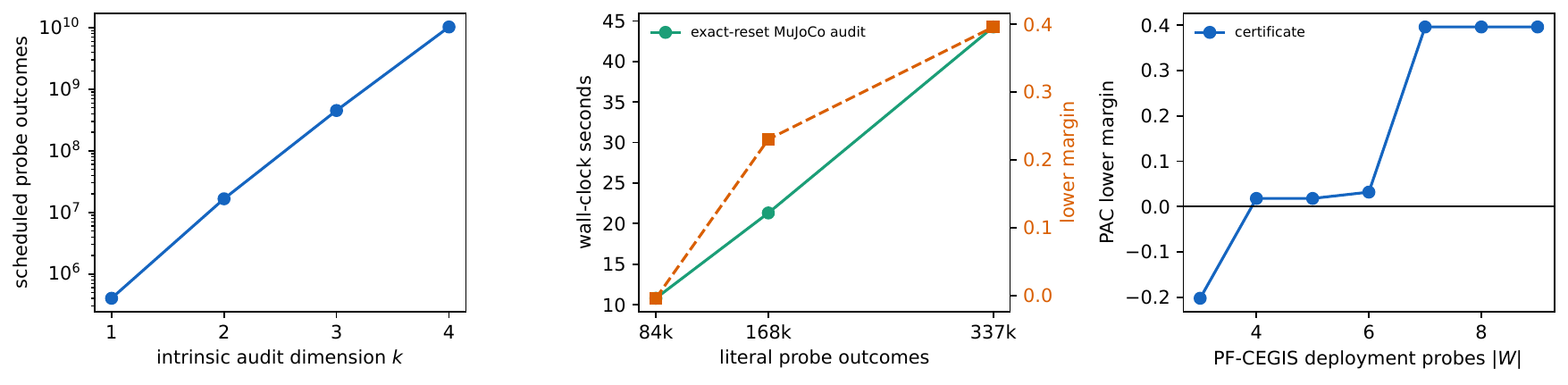}
\caption{Offline certification cost.  Left: the theorem's literal sufficient schedule grows rapidly with intrinsic audit dimension.  Center: exact-reset InvertedPendulum audit wall time and lower margin as rollout budget grows.  Right: PF-CEGIS selects deployment subsets from a fixed fully sampled audit bank with no additional rollout cost.}
\label{fig:efficiency}
\end{figure}

\section{Proofs and technical details}\label{app:proofs}

\subsection{Proof of Proposition~\ref{prop:embedding}}\label{app:embedding}
The unit tangent bundle $T^1\State$ is compact.  The open sets
\[
 \{(s,v)\in T^1\State:d\phi_w(s)[v]\ne0\},\qquad w\in\Tests,
\]
cover it, so a finite $W_1$ makes $d\Phi_{W_1}$ injective on every tangent space.  Thus $\Phi_{W_1}$ is an immersion and locally one-to-one.  A neighborhood $V$ of the diagonal in $\State\times\State$ is consequently separated by $W_1$.  The compact set $(\State\times\State)\setminus V$ is covered by the open point-separation sets $\{(s,s'):\phi_w(s)\ne\phi_w(s')\}$, so finitely many further probes $W_2$ cover it.  Hence $\Phi_{W_1\cup W_2}$ is a continuous one-to-one immersion.  A continuous injection from a compact space to a Hausdorff space is a homeomorphism onto its image, which proves that this map is a $C^1$ embedding.

\subsection{Proof of Theorem~\ref{thm:pac}}
Hoeffding's inequality and a union bound over the $NM$ anchor--probe means give, with probability at least $1-\alpha$,
\[
\max_{i,u}\abs{\widehat\phi_u(s_i)-\phi_u(s_i)}\le e_R.
\]
Condition on this event.  Since $W\subseteq\Audit$, $d_W\le d_\Audit$, so the same $\eta$-net controls both metrics.  Let $s,s'\in\Domain$ satisfy $d_\Audit(s,s')\ge\delta$ and choose nearest anchors $s_i,s_j$.  Triangle inequality gives $d_\Audit(s_i,s_j)\ge\delta-2\eta$.  The empirical distance differs from the true one by at most $2e_R$, so $(i,j)\in\widehat{\mathcal A}_\delta$.  Again using $W\subseteq\Audit$,
\begin{align*}
 d_W(s,s')&\ge d_W(s_i,s_j)-2\eta\\
 &\ge \widehat d_W(i,j)-2e_R-2\eta\\
 &\ge \widehat\Omega_{W\mid\Audit}(\delta)-2e_R-2\eta.
\end{align*}
Taking the infimum over $s,s'$ proves \eqref{eq:pac}.

\subsection{Proof of Corollary~\ref{cor:joint}}
Intersect the calibration event with the Hoeffding event used in Theorem~\ref{thm:pac}, and apply a union bound.

\subsection{Proof of Theorem~\ref{thm:complexity}}
Let $q=\min\{\delta,\Omega\}$.  The covering assumption supplies $N=O(C_\State q^{-k})$ anchors at $\eta\le q/8$.  Choosing $R=O(q^{-2}\log(NM/\alpha))$ makes $e_R\le q/16$.  Any empirically admitted pair has true audit distance at least
\[
 \delta-2\eta-4e_R\ge\delta/2.
\]
Its true $W$ distance is therefore at least $\Omega$, so its empirical distance is at least $\Omega-2e_R$.  Equation~\eqref{eq:pac} is at least $\Omega-4e_R-2\eta\ge\Omega/2$.  Multiplying $N$, $M$, and $R$ proves the display.

\subsection{Proof of Theorem~\ref{thm:lower}}
Write $a=8\Omega$ and set
\[
 \Phi^{(0)}_\ell(s)=\frac12+a\left(s_\ell-\frac12\right),\qquad \ell=1,\ldots,k.
\]
Choose $\Omega_0\le\min\{L/32,1/24\}$.  For $0<\Omega\le\Omega_0$, this map has range in $[1/3,2/3]^k$ and is $L/2$-Lipschitz under $\ell_\infty$ norms.  Moreover,
\[
 \begin{aligned}
 \lVert\Phi^{(0)}(s)-\Phi^{(0)}(s')\rVert_\infty
 &=8\Omega\lVert s-s'\rVert_\infty\\[-2pt]
 &\ge2\Omega\quad\text{if }\lVert s-s'\rVert_\infty\ge\delta_0.
 \end{aligned}
\]
Thus $\Phi^{(0)}\in H_0$.

Put $h=4\Omega/L$.  For a constant depending only on $k$, one can pack
\[
 M_0\ge c'_k(L/\Omega)^k
\]
disjoint $\ell_\infty$ balls $B_\infty(t_j,h)$ in the interior strip
$\{s:\delta_0+h<s_1<1-h,\ h<s_\ell<1-h\ (\ell>1)\}$.  Let
$u_j=t_j-\delta_0e_1$.  Since $h<\delta_0$, $u_j\notin B_\infty(t_j,h)$.  Define the tent bump and the alternative map by
\[
 b_j(s)=2\Omega\left[1-\frac{\lVert s-t_j\rVert_\infty}{h}\right]_+.
\]
\[
 \Phi^{(j)}(s)=\Phi^{(0)}(s)-b_j(s)e_1.
\]
The bump is $L/2$-Lipschitz and has magnitude at most $2\Omega$; consequently $\Phi^{(j)}$ is $L$-Lipschitz and has range in $[1/4,3/4]^k$.  Because $b_j(t_j)=2\Omega$, $b_j(u_j)=0$, and the baseline changes only in coordinate one along this displacement,
\[
 \Phi^{(j)}(t_j)=\Phi^{(j)}(u_j),
 \qquad \norm{t_j-u_j}_\infty=\delta_0.
\]
Hence every $\Phi^{(j)}$ belongs to $H_1$.

Consider any possibly randomized adaptive algorithm, and let $P_0,P_j$ denote its complete transcript laws under $\Phi^{(0)},\Phi^{(j)}$.  Let $N_j$ count queries that request coordinate one at a state in $B_\infty(t_j,h)$.  The balls are disjoint, so $\sum_j\E_0N_j\le B$; for some $j^*$, $\E_0N_{j^*}\le B/M_0$.  The two observation laws agree outside this ball.  Inside, their Bernoulli means differ by at most $2\Omega$ and lie in $[1/4,3/4]$, where
\[
 \mathrm{kl}(\operatorname{Ber}(p)\Vert\operatorname{Ber}(q))
 \le C(p-q)^2.
\]
The adaptive KL chain rule therefore gives
\[
 \mathrm{KL}(P_0\Vert P_{j^*})
 \le 4C\Omega^2\E_0N_{j^*}
 \le4C\Omega^2B/M_0.
\]
If $B<cM_0\Omega^{-2}$ for a sufficiently small universal $c$, Pinsker's inequality gives $\operatorname{TV}(P_0,P_{j^*})<1/3$.  Le Cam's two-point inequality then forces the sum of the two testing errors above $2/3$, contradicting error at most $1/3$ under both $H_0$ and $H_1$.  Substituting the packing lower bound for $M_0$ yields $B\ge c_kL^k\Omega^{-(k+2)}$.

\begin{corollary}[Safe finite-bank PF-CEGIS reuse]\label{cor:cegis-reuse}
Fix $(\Domain,\Audit,\delta)$ before sampling.  On the Hoeffding event in Theorem~\ref{thm:pac}, inequality \eqref{eq:pac} holds simultaneously for every $W\subseteq\Audit$ for which the empirical minimum is defined.  Hence $W$ may be selected adaptively from this fixed, fully sampled audit bank.  If the search expands $\Audit$, changes $\delta$ or $\Domain$, or adaptively trains and validates the corrector, it instead requires fresh data or a corresponding simultaneous confidence statement.
\end{corollary}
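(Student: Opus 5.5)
The argument reuses the proof of Theorem~\ref{thm:pac}, noting that the only random event there is the Hoeffding event
\[
\mathcal E=\Bigl\{\max_{i\le N,\,u\in\Audit}\abs{\widehat\phi_u(s_i)-\phi_u(s_i)}\le e_R\Bigr\}.
\]
This event is determined by the pre-declared anchors and the full bank $\Audit$ alone; it does not reference $W$. Since $(\Domain,\Audit,\delta,\eta,N,R,\alpha)$ are fixed before sampling and outcomes are independent across $(i,u,r)$, Hoeffding's inequality and a union bound over the $NM$ pairs give $\Pr(\mathcal E)\ge 1-\alpha$. No multiplicity correction over the $2^M$ subsets is needed, because no subset appears in the definition of $\mathcal E$.

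Next I would show that on $\mathcal E$ the deterministic chain of inequalities in the proof of Theorem~\ref{thm:pac} holds for every $W\subseteq\Audit$ at once. Fix any such $W$ for which $\widehat\Omega_{W\mid\Audit}(\delta)$ is defined. The active set $\widehat{\mathcal A}_\delta$ depends only on $\widehat d_\Audit$, so it is the same for every $W$. For any $s,s'\in\Domain$ with $d_\Audit(s,s')\ge\delta$, pick nearest anchors. On $\mathcal E$ the pair $(i,j)$ lies in $\widehat{\mathcal A}_\delta$. The bounds $d_W\le d_\Audit$ (so the $\eta$-net transfers) and $\abs{\widehat d_W(i,j)-d_W(s_i,s_j)}\le 2e_R$ then follow coordinatewise from $\mathcal E$, since every coordinate of $W$ is a coordinate of $\Audit$. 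Therefore \eqref{eq:pac} holds for this $W$, and hence for all admissible $W$, on the same event.

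Any selection rule $\widehat W$ is a measurable function of the data and takes values in $\{W:W\subseteq\Audit\}$; this includes PF-CEGIS, which may choose $u^*$ using the sampled outcomes. On $\mathcal E$, \eqref{eq:pac} holds for $\widehat W$ because it holds for every fixed subset. So $\Pr\bigl(\Omega^{\Domain}_{\widehat W\mid\Audit}(\delta)\ge\underline\Omega^{\Domain}_{\widehat W\mid\Audit}(\delta)\bigr)\ge1-\alpha$, and the gate in Theorem~\ref{thm:deterministic} remains valid after adaptive selection, combined with independent calibration as in Corollary~\ref{cor:joint}.

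Finally I would argue the negative clauses by identifying where the argument breaks. If $\Audit$ is expanded to $\Audit'$, the event $\mathcal E$ does not cover the new coordinates, and $e_R$ was calibrated to $\log(2NM/\alpha)$ rather than to $\abs{\Audit'}$. If $\delta$ or $\Domain$ is changed after seeing the data, the net, the threshold, or the active set is chosen adaptively, so the pre-declaration in Assumption~\ref{assump:generative} fails. If the corrector is adaptively trained and validated on the same data, the independence used in Corollary~\ref{cor:joint} fails. In each case validity is restored by fresh samples or a union bound over the enlarged family. The main point requiring care is the second step: one must check that no quantity in the chain depends on $W$ through anything other than coordinate restriction. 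In particular, $e_R$ and the $\eta$-net must be defined relative to $\Audit$, not $W$. Once that is confirmed, uniformity over subsets, and hence the adaptivity claim, follows immediately.
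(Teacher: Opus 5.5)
Your proposal is correct and is the paper's argument written out in more detail: the Hoeffding event over the $NM$ anchor--probe means does not reference $W$, and the chain of inequalities in Theorem~\ref{thm:pac} uses only that event together with $d_W\le d_\Audit$, so \eqref{eq:pac} holds for every $W\subseteq\Audit$ simultaneously and hence for any data-dependent $\widehat W$. The paper does not argue the negative clauses at all, and your explanation of them is reasonable with one refinement: $\mathcal E$ does not involve $\delta$ either, so changing $\delta$ with the anchors and $\eta$ fixed is already covered by the same event, and the genuine failure points are unsampled probes, a domain no longer covered by the $\eta$-net, and a corrector validated on reused data.
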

\begin{proof}
The event in the proof of Theorem~\ref{thm:pac} bounds every one of the $NM$ pre-declared means.  For each $W\subseteq\Audit$, that proof uses only this same event and $d_W\le d_\Audit$.  Intersecting no additional random events proves simultaneity over the finite family of banks.
\end{proof}

\subsection{Remarks on Proposition~\ref{prop:coverage} (Coverage is necessary)}\label{app:coverage}
The proof in Section~\ref{sec:discussion} constructs two environments that agree outside $A$ and differ on a future outcome inside $A$.  Because the data-collection policy never visits $A$, the induced trajectory laws are identical, establishing that distribution-free certification over unvisited states is impossible without additional structural or domain assumptions.

\subsection{Proof-assumption audit}\label{app:audit}
Table~\ref{tab:proof-audit} records every premise that does material work in the main claims, including the boundary cases that would otherwise silently invalidate an inference.  It is also the checklist used by the neural generative-PAC experiment.

\end{document}